\documentclass[letterpaper]{article} % DO NOT CHANGE THIS
\usepackage{aaai2027}
\nocopyright
\usepackage[hyphens]{url}  % DO NOT CHANGE THIS
\usepackage{graphicx} % DO NOT CHANGE THIS
\usepackage{natbib}  % DO NOT CHANGE THIS AND DO NOT ADD ANY OPTIONS TO IT
\usepackage{caption} % DO NOT CHANGE THIS AND DO NOT ADD ANY OPTIONS TO IT
\usepackage{algorithm}
\usepackage{algorithmic}
\usepackage{newfloat}
\usepackage{listings}
\DeclareCaptionStyle{ruled}{labelfont=normalfont,labelsep=colon,strut=off} % DO NOT CHANGE THIS
\floatstyle{ruled}
\newfloat{listing}{tb}{lst}{}
\floatname{listing}{Listing}
\usepackage{booktabs}
\usepackage{amsmath}
\usepackage{amssymb}
\usepackage{amsfonts}
\usepackage{amsthm}
\theoremstyle{plain}
\newtheorem{proposition}{Proposition}
\theoremstyle{definition}
\newtheorem{assumption}{Assumption}

\newcommand{\topk}{top-$k$}
\newcommand{\R}{\mathbb{R}}

\newcommand{\sig}{\sigma}
\newcommand{\Var}{\mathrm{Var}}

\title{When Can You Debias an LLM Judge?\\ Identifiability Limits, a Test, and Designs for Top-$k$ Ranking}
\author{
    Jian Xu\textsuperscript{\rm 1,\rm 2},
    Delu Zeng\textsuperscript{\rm 3},
    John Paisley\textsuperscript{\rm 4},
    Qibin Zhao\textsuperscript{\rm 2}
}
\affiliations{
    \textsuperscript{\rm 1}RIKEN iTHEMS\quad
    \textsuperscript{\rm 2}RIKEN AIP\quad
    \textsuperscript{\rm 3}South China University of Technology\quad
    \textsuperscript{\rm 4}Columbia University\\
    jian.xu@riken.jp
}

\begin{document}

\maketitle

\begin{abstract}
Large language models (LLMs) are increasingly used as cheap, scalable judges that compare
candidate outputs pairwise. Because such judges prefer verbose or well-formatted answers, the
natural fix is to add bias covariates to a Bradley--Terry model and estimate the bias away. We
show this cannot work as advertised: the quality/bias split is \emph{not identified} by pairwise
comparisons, and the failure is exact---across $48$ real judge-pools the profile likelihood over
the coefficient is flat to $\mathbf{0.0000}$ \textbf{nats}, and scaling the comparisons
$26\times$ buys none. A ``debiased'' score is selected by the prior, not recovered from data.
Our contribution is accordingly not a better estimator but a characterization of \emph{when
prior-based correction is justified}, plus designs that supply the missing information when it
is not. The assumption
the prior encodes---quality is a priori uncorrelated with the covariate---pays only while
$\mathrm{corr}(\theta,x)$ stays below a crossing point (configuration-dependent, $0.22$--$0.60$),
which is what makes the same model help on LLMBar and hurt on SummEval and Nectar. We give two
escapes: a \textbf{trusted-anchor gate} that decides per (judge, covariate, task) (no false enables in $6{,}000$
decisions at $K\ge10$ anchors, a rate our sample bounds at $\le6\%$), and a \textbf{paired
rendering design}---the only design-based intervention \emph{we study} that restores likelihood identification, turning
$0.0000$ nats into $172.7$ and making a prior-free estimate exist. A
secondary \topk-aware acquisition heuristic beats budget-agnostic and D-optimal allocation
under both a frozen and a stochastic judge, but its edge over Thompson/LUCB is specific to the
frozen protocol. Across fifteen real LLM judges bias is heterogeneous and capability-dependent:
correction improves \topk{} recall by $0.20$--$0.32$ on five biased-but-competent cheap judges and is
a no-op on frontier ones (Spearman $\rho{=}{-}0.84$ between competence and gain over the $14$
competent judges, $p{<}10^{-3}$), concentrating the benefit where at-scale evaluation happens.
\end{abstract}

% \begin{links}
%     \link{Code}{https://anonymous.4open.science/r/bayes-judge}
% \end{links}

\section{Introduction}
Pairwise comparison by an LLM ``judge'' has become a default tool for evaluating and
selecting natural-language outputs: ranking chatbot responses, scoring model variants,
filtering generated candidates, and triaging documents. The appeal is cost---an LLM
judge is orders of magnitude cheaper than a human annotator---but the cost comes with
two well-documented liabilities. LLM judges are \emph{noisy}, returning inconsistent
verdicts across paraphrases and runs, and they are \emph{systematically biased},
preferring longer, more elaborate, or better-formatted answers and favoring whichever
candidate appears in a particular position \citep{zheng2023judging,wang2023fair,
zeng2024llmbar,dubois2024length}. The consequence is subtle but damaging: if we simply
count wins, the resulting ranking reflects \emph{presentation}---verbosity and
formatting---as much as it reflects quality.

In most applications we do not need the full ranking; we need the \topk{} items: the
best few responses to keep, the strongest models to promote, the papers worth a human
read. This reframing has two implications that prior work on LLM judging has not
jointly addressed. First, recovering the correct \topk{} under a biased judge is a
\emph{statistical estimation} problem---and, as we show, one the comparison data alone
cannot solve: quality and the judge's presentation preference are perfectly confounded, so
any separation rests on an assumption we must state and test. Second, comparisons cost
money, so \emph{which} pairs we query matters: effort spent distinguishing items far
from the \topk{} boundary is wasted.

We address both (Fig.~\ref{fig:overview}). We model judging as a Bayesian Bradley--Terry
process over latent item quality, augmented with \emph{bias covariates}---observable item
features such as elaboration, together with a position term---whose coefficients are fit
jointly with quality. This is the model a practitioner would naturally reach for, and our
first finding is about its limits rather than its performance: the shrinkage prior on the
covariate coefficient does not merely regularize, it \emph{selects} the quality/bias split,
because the comparisons themselves carry exactly zero information about that split. Reporting
its output as ``debiased quality'' is thus a category error---the number is a restatement of
the prior's assumption, and it is right precisely when that assumption is. What can be done
honestly is to make the assumption explicit, map the regime where it pays, assess its operational validity per (judge, covariate, task)
with a few trusted labels, and---where the evaluator controls how items are rendered---design
the comparison set so the split becomes estimable at all. That is the arc of this paper. On
top of the resulting posterior we additionally define a \topk-aware acquisition function that
selects the next comparison to most reduce uncertainty about \topk{} \emph{membership}. We
treat this as a secondary, applied contribution rather than a headline: it is a heuristic, its
advantage over budget-agnostic allocation is solid but its advantage over standard
information-theoretic and bandit rules is not, and we report the regimes where it vanishes as
carefully as the ones where it holds.

Our position is therefore not that we estimate and remove judge bias, but that
\emph{we characterize when prior-based correction is justified, and supply identifiable or
supervised designs when it is not}. Concretely:
\begin{itemize}
\item An \textbf{exact non-identifiability result}: the item-level covariate coefficient is not
identified by comparisons (only apparent quality $\theta{+}cx$ and the position term are), and
this is not a small-sample artifact---on all $48$ real judge-pools the profile likelihood over
$c$ is flat to $0.0000$ nats, unchanged as comparisons scale $26\times$. ``Debiasing'' is thus
an assumption the prior encodes, not a quantity the data reveal.
\item A \textbf{characterization of when the assumption pays}: the benefit falls monotonically
with $\mathrm{corr}(\theta,x)$ and reverses at a configuration-dependent crossing
($0.22$--$0.60$), predicting the split between benchmarks where correction helps and hurts.
\item \textbf{Two designs that escape the assumption}: a trusted-anchor \textbf{gate} that
tests it per (judge, covariate, task) with a few labels, and a \textbf{paired rendering design} that restores
identifiability by construction---the only design-based intervention we study that restores
likelihood identification.
\item A secondary, applied \textbf{\topk-aware acquisition heuristic}: it beats budget-agnostic
and D-optimal allocation, though its edge over Thompson/LUCB is specific to a frozen judge.
\item \textbf{Evidence on fifteen real LLM judges}: verbosity bias is prevalent but
\emph{capability-dependent}---strong on cheap judges, absent on frontier ones---so it must be
tested per judge, not assumed.
\end{itemize}

\begin{figure*}[t]
\centering
\includegraphics[width=0.99\textwidth]{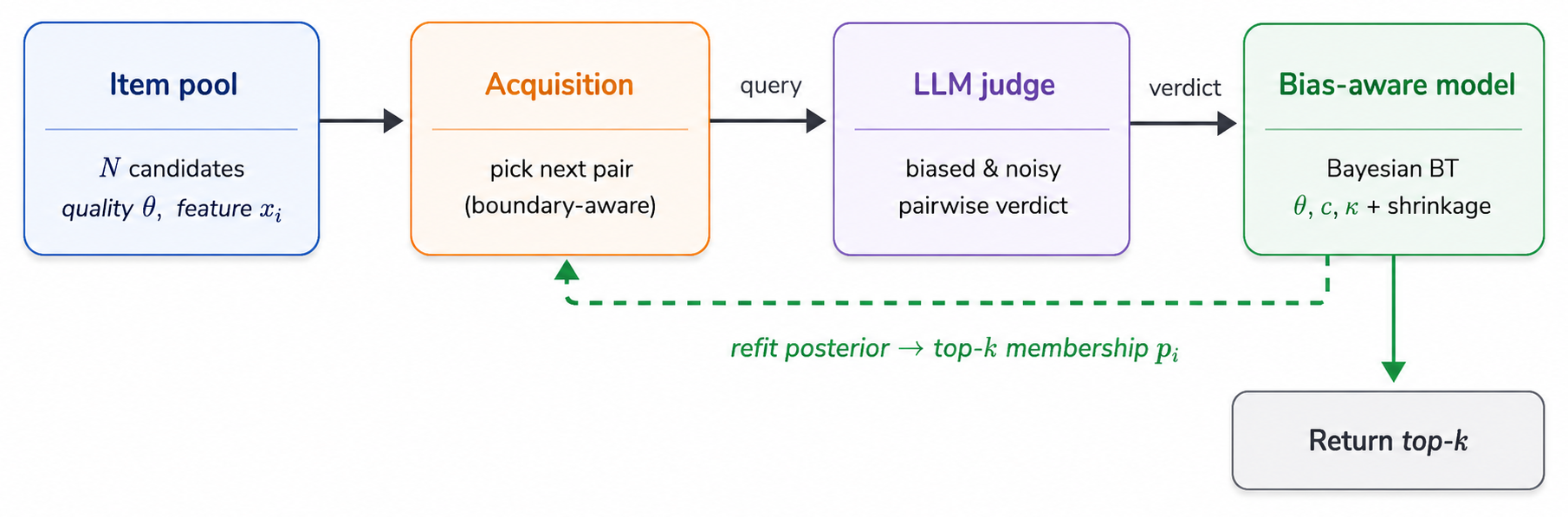}
\caption{Overview. A budget-limited active loop: the \topk-aware rule picks the next pair for
a biased, noisy LLM judge; the verdict updates a bias-aware Bayesian Bradley--Terry posterior
over quality $\theta$ and bias $(c,\kappa)$; the refreshed \topk{}-membership probabilities
$p_i$ drive the next pick. The $\theta$/$c$ split is an \emph{assumption} the prior selects
(Prop.~\ref{prop:ident}).}
\label{fig:overview}
\end{figure*}

\section{Related Work}
\paragraph{LLM-as-a-judge and its biases.}
LLM judges are now standard for open-ended evaluation
\citep{zheng2023judging,li2024llmjudgesurvey}, with well-documented position/order
\citep{wang2023fair}, verbosity/length \citep{dubois2024length,saito2023verbosity} and
self-preference biases; LLMBar \citep{zeng2024llmbar} shows judges are fooled by superficially
appealing but worse answers. Mitigations are largely heuristic (order-swapping, calibration
prompts, majority vote). Placing bias inside a generative model and estimating it is the
natural next step and is the model we study---but our contribution is to show it cannot identify
the item-level coefficient, so the ``correction'' is the prior's assumption, not a measured
property of the judge (the position term excepted, identified by within-pair order variation).

\paragraph{Bayesian methods around LLMs.}
A growing line treats LLM components probabilistically
\citep{tonolini2024bayespe,ross2025textual,huang2026bayesagent}. Closest is BAL-PM
\citep{melo2024balpm}, Bayesian active learning that selects informative preference pairs for
reward-model training; our target is \topk{} \emph{identification} under a \emph{biased} judge,
with the bias modeled explicitly, rather than reward-model data efficiency.

\paragraph{Identification in measurement models.}
The impossibility behind Prop.~\ref{prop:ident} is not new, and we say so plainly: granting
each item a free effect makes an item-level covariate collinear with the item block, a
close relative of the fixed-effects/incidental-parameter problem
\citep{neyman1948consistent,lancaster2000incidental}, and the standard remedy---conditioning on
or designing for within-unit variation \citep{chamberlain1980analysis,andersen1970asymptotic}---is
exactly what our paired rendering design does. Related ideas recur in psychometrics
\citep{linacre1989many,holland2012differential,meredith1993measurement}. We claim no theorem
that would surprise a psychometrician; our contribution is that this known impossibility
silently governs LLM-judge debiasing, which is performed and reported as if the coefficient
were estimated. The supplementary material expands the correspondence.

\paragraph{Ranking from comparisons and active selection.}
Recovering quality from pairwise outcomes is the Bradley--Terry/Thurstone problem
\citep{bradley1952rank}. Active and budgeted variants---dueling bandits, active ranking,
and top-$k$ identification---choose comparisons adaptively
\citep{szorenyi2015online,heckel2019active}. Our acquisition rule is a \topk-aware,
bias-aware instance of Bayesian experimental design \citep{houlsby2011bald}: it uses the
posterior over \emph{membership} in the \topk{} set, and it is computed under a judge
model that contains nuisance bias parameters.

\section{Problem Setup and Model}
\paragraph{Setup.}
We have $N$ items with unknown scalar quality $\theta \in \R^N$. A query presents an
ordered pair $(a,b)$ to an LLM judge, which returns $y=1$ if it prefers the
\emph{first-shown} item $a$ and $y=0$ otherwise. Each item $i$ carries observable
covariates $x_i \in \R^d$ that may bias the judge. Throughout the controlled benchmark
$x_i$ is a \emph{single binary elaboration indicator}, $x_i\in\{0,1\}$, recording whether item
$i$ was rendered tersely or with elaboration clauses; we use the binary indicator rather than
raw length because our benchmark manipulates elaboration directly and holds the statement
count fixed, so the indicator is the manipulated variable and length is its consequence. (The
external-data study of \S\ref{sec:exp} instead uses standardized word count, which is the only
covariate available there; we say so explicitly at each such use.) Given a budget of $B$ comparisons we want to return the set of
the $k$ highest-quality items, $S_k(\theta)=\arg\!\topk_i \theta_i$.

\paragraph{Bias-aware Bradley--Terry model.}
We model the judge's verdict as
\begin{equation}
\Pr(y=1 \mid a,b) = \sig\!\Big(\theta_a - \theta_b + \textstyle\sum_{m=1}^{d} c_m (x_{a,m}-x_{b,m}) + \kappa\Big),
\label{eq:model}
\end{equation}
where $\sig$ is the logistic function, $c \in \R^d$ are \emph{bias coefficients} on the
item covariates (e.g.\ a positive $c$ for length encodes verbosity bias), and $\kappa$
is a \emph{position} term capturing a constant preference for the first-shown answer
(its sign gives the direction of position bias). Setting $c=0,\kappa=0$ recovers the
ordinary Bradley--Terry model used by naive win-counting; we call that the \emph{naive}
model and Eq.~\eqref{eq:model} the \emph{bias-aware} model.

\paragraph{Priors and adaptive shrinkage.}
We place $\theta \sim \mathcal{N}(0,\lambda^{-1}I)$ and a shrinkage prior on the bias
coefficients, $c_m,\kappa \sim \mathcal{N}(0,\tau^2)$ with a finite, moderate $\tau$ (our $\lambda_b{=}\tau^{-2}{=}0.1$ is $\tau\!\approx\!3.2$, not a tight prior). It is tempting to
describe this as letting the data decide which biases are active, and that description is
wrong: by Prop.~\ref{prop:ident} the comparisons determine only the apparent quality
$\phi=\theta+cx$ and the position term $\kappa$, never the item-level split between
$\theta$ and $c$. What the data can decide is $\kappa$---a genuine per-comparison effect,
identified because presentation order varies \emph{within} an item pair---and the overall
fit. What the prior decides is the decomposition of $\phi$, via
Eq.~\eqref{eq:cstar}. A smaller $\tau$ does not detect that a judge is unbiased; it
\emph{assumes} more strongly that the covariate-aligned part of apparent quality is real
quality, and the estimate follows.

Two consequences we state plainly rather than paper over. First, when a judge is truly
unbiased the shrinkage prior does keep the model close to ordinary Bradley--Terry, but ``close''
is not ``free'': on our unbiased synthetic judge the bias-aware model reaches $0.79$ \topk{}
recall against naive's $0.86$---a $7$-point drop from estimating a coefficient that should be
zero, which is a real cost, not a rounding error. Second, a hierarchical or empirical-Bayes
$\tau$ does \emph{not} rescue identifiability: tuning $\tau$ on the marginal likelihood
selects adaptively among prior-induced decompositions, and every one of them is a
different assumption about $\theta \perp x$, not a measurement of the judge. Nothing in the
hyperprior can recover a direction the likelihood is flat along. A horseshoe prior is a
drop-in alternative with the same caveat. We use a fixed moderate $\tau$ in the main results
and report a sensitivity sweep in the appendix.

\section{Analysis}
We give three results that explain the empirical findings: naive aggregation is
\emph{inconsistent} for \topk{} recovery under a biased judge; the quality/bias split in the
bias-aware model is \emph{non-identifiable}, so the prior rather than the data selects it, and
a paired rendering design restores identifiability; and the acquisition
score~(S2) is a boundary-targeted surrogate rather than an exact
information rule. The first and third are unsurprising instantiations of omitted-variable bias
and a delta-method-style heuristic. The second is the paper's centre of gravity: it is what
makes the natural fix inadmissible as stated, and everything downstream---the gate, the paired
design, the $\mathrm{corr}(\theta,x)$ characterization---exists to answer it. Proof sketches
are given here; details are in the appendix.

\paragraph{Naive aggregation recovers the wrong \topk{}.}
We state the cleanest result in the linear (Thurstone) comparison model, where the judge
prefers $a$ iff $\theta_a-\theta_b+c^\star(x_a-x_b)+\varepsilon>0$ with symmetric noise
$\varepsilon$ and randomized order; the logistic case behaves analogously.

\begin{assumption}\label{as:indep}
The bias covariate is non-degenerate and uncorrelated with quality across items,
$\tfrac1N\sum_i x_i\theta_i \to 0$, and $c^\star\neq 0$.
\end{assumption}

\begin{proposition}[Confounded estimand]\label{prop:naive}
Let $\tilde\theta$ be the population minimizer of the naive objective
(Eq.~\eqref{eq:obj} with $c=\kappa=0$). Then $\tilde\theta_i=\theta_i+c^\star x_i$ up to
an additive constant. Consequently, for any items $i,j$ with $\theta_i>\theta_j$ but
$\theta_i-\theta_j< c^\star(x_j-x_i)$, the naive ranking places $j$ above $i$; whenever
such a pair straddles the \topk{} boundary, $S_k(\tilde\theta)\neq S_k(\theta)$, and no
amount of additional comparisons removes the error.
\end{proposition}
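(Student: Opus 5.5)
The plan is to identify the population minimizer of the naive objective by a well-specification argument, not by solving first-order conditions directly. Write the true comparison probability for an ordered pair $(a,b)$ as $F(\phi_a-\phi_b+\kappa^\star)$, where $\phi=\theta+c^\star x$ is the apparent quality and $F$ is the symmetric noise CDF. Since presentation order is randomized, averaging over the two orders makes the unordered preference probability a function of $\phi_a-\phi_b$ alone. The position term contributes symmetrically and cancels in the order-averaged population objective; in the logistic case one instead checks that it only perturbs the objective in a direction orthogonal to $\theta$-differences.

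The naive model is a Bradley--Terry (or Thurstone) model in $\theta$ only. We would therefore observe that it is \emph{correctly specified in the parameter $\phi$}: the data-generating distribution is exactly the naive model evaluated at $\theta=\phi$. The population cross-entropy (expected negative log-likelihood) is minimized at the true conditional distribution by Gibbs' inequality. On its own, Gibbs' inequality only shows that the minimizer matches the true conditional distribution of $y$ for every queried pair. Combining this with strict monotonicity of $F$ and a connectedness condition on the comparison graph (every pair, or at least a connected design, has positive sampling probability) gives uniqueness of differences. We then conclude $\tilde\theta_a-\tilde\theta_b=\phi_a-\phi_b$ for all $a,b$, i.e.\ $\tilde\theta_i=\theta_i+c^\star x_i$ up to an additive constant.

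Assumption~\ref{as:indep} is not needed for this identity. It serves only to rule out the degenerate case where the shift $c^\star x$ is constant or aligned with $\theta$, which would make the conclusion vacuous. We would remark on this in passing.

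The ranking consequence is then algebra. The naive ranking has $j$ above $i$ iff $\tilde\theta_j>\tilde\theta_i$, i.e.\ $\theta_i-\theta_j<c^\star(x_j-x_i)$, which is exactly the stated condition. If $i\in S_k(\theta)$ and $j\notin S_k(\theta)$, and $j$ overtakes $i$, then the top-$k$ sets must differ. To make this rigorous we would argue by counting: the naive top-$k$ would have to contain all items ranked above $i$ under $\tilde\theta$. A cleaner formulation is to take ``straddles'' to mean that $j$ enters $S_k(\tilde\theta)$ or $i$ leaves it, and state the result in that form.

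The ``no amount of comparisons'' clause follows from standard M-estimation consistency. The empirical naive maximum-likelihood estimate converges to the population minimizer $\tilde\theta$; with the usual regularity, the prior's influence vanishes as $B\to\infty$. Since $S_k(\tilde\theta)\neq S_k(\theta)$ is a fixed, strict event (given no ties in $\tilde\theta$ at the boundary), the estimated set converges to the wrong set with probability tending to one.

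The main obstacle is handling $\kappa^\star$ in the logistic case, where order-averaging does not make the averaged link exactly logistic in $\phi_a-\phi_b$. The first thing to try is to note that the naive objective is strictly convex in the differences, and that by the symmetry $(a,b)\leftrightarrow(b,a)$ of the gradient, $\phi$ still solves the first-order conditions under randomized order. That symmetry argument is the step to check carefully. If it fails, we would state the result in the Thurstone setting with $\kappa^\star=0$, as the proposition's preamble already permits.
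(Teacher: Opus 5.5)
Your route is the paper's. Its appendix proof also treats the naive model as correctly specified in the augmented score $\theta+c^\star x$, so that matching population win-rates through a strictly increasing link pins down every gap. Your Gibbs-inequality, connectedness and M-estimation steps make explicit what the paper asserts in a line each.

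\textbf{The flagged step fails, and more broadly than you expect.} Order-averaging gives $\Pr(a\succ b)=G(d)$ with $G(d)=\tfrac12[F(d+\kappa^\star)+F(d-\kappa^\star)]$ and $d=\phi_a-\phi_b$. This depends on $d$ alone, but it is not $F(d)$. The $(a,b)\leftrightarrow(b,a)$ symmetry cancels only the odd powers of $\kappa^\star$ and leaves $\tfrac12\kappa^{\star 2}F''(d)+O(\kappa^{\star 4})$.

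For the logistic link one has exactly $G(d)=\tfrac12+\frac{\sinh d}{2(\cosh d+\cosh\kappa^\star)}$. This is strictly below $\sigma(d)=\tfrac12+\frac{\sinh d}{2(\cosh d+1)}$ for $d>0$ whenever $\kappa^\star\neq0$. Now evaluate the order-averaged naive objective at $\tilde\theta=\phi$. Its gradient in the coordinate of the item $a$ with largest $\phi$ is a positive combination of the terms $\sigma(d_{ab})-G(d_{ab})>0$. So $\phi$ is not stationary, and the identity is false.

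The probit case fails the same way, since an equal mixture of two shifted Gaussians is not Gaussian. Your opening claim that the position term ``cancels'' in the Thurstone setting is therefore also wrong. The exact identity needs two conditions: $\kappa^\star=0$, and a naive link equal to the noise CDF. Eq.~\eqref{eq:obj} is logistic, so a Gaussian $\varepsilon$ would itself be a misspecification. That is precisely the preamble's model, so your fallback is not a retreat but the theorem. The paper's parenthetical ``the position term averages out under randomized order'' has the same flaw.

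\textbf{Salvaging the ranking claims for $\kappa^\star\neq0$.} The ranking conclusions (not the identity) survive in a complete design with both orders. The naive first-order conditions read $\sum_{b\neq a}\sigma(\tilde\theta_a-\tilde\theta_b)=\sum_{b\neq a}G(\phi_a-\phi_b)$. Compare items $a,c$ term by term, pairing the $b=c$ term of one sum with the $b=a$ term of the other. Using only that $\sigma$ and $G$ are strictly increasing, this gives $\tilde\theta_a>\tilde\theta_c$ iff $\phi_a>\phi_c$.

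\textbf{Two smaller simplifications.}
\begin{itemize}
\item No reformulation of ``straddles'' is needed. If $S_k(\tilde\theta)=S_k(\theta)$ with $i$ inside and $j$ outside, that would force $\tilde\theta_i\ge\tilde\theta_j$, a contradiction.
\item The consistency step does not need ``no ties at the boundary.'' The strict inversion $\tilde\theta_j>\tilde\theta_i$ is inherited by the estimate with probability tending to one, and by the same argument that already forces the wrong set.
\end{itemize}
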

\noindent\emph{Sketch.} Omitting $x$ from a correctly specified linear comparison model
is an omitted-variable problem: because each item appears with its own indicator, the
naive fit assigns to item $i$ the quantity that best explains its win rate, namely
$\theta_i+c^\star x_i$. Under Assumption~\ref{as:indep} the $c^\star x_i$ term does not
average out at the item level, so it shifts the per-item estimand and reorders items
whose quality gap is smaller than their verbosity gap. The shift is a property of the
estimand, not of finite samples, giving the budget-independent plateau in
Table~\ref{tab:main}. \hfill$\square$

\paragraph{The quality/bias split is \emph{not} identified by comparisons alone.}
\begin{proposition}[Non-identifiability of the item-level bias coefficient]\label{prop:ident}
With a free quality parameter $\theta_i$ per item and a fixed item-level covariate $x_i$,
the likelihood of Eq.~\eqref{eq:model} is invariant under
\begin{equation}
\theta_i' = \theta_i + \delta x_i,\qquad c' = c-\delta,\qquad \forall\delta\in\R .
\label{eq:invariance}
\end{equation}
Hence no set of comparisons---within-$x$, cross-$x$, in any number---identifies $c$ or
$\theta$ separately; only the \emph{apparent quality} $\phi_i=\theta_i+c\,x_i$ (up to an
additive constant) and the position term $\kappa$ are identified.
\end{proposition}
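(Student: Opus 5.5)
The plan is to show directly that the logit in Eq.~\eqref{eq:model} depends on $(\theta,c)$ only through the differences $\phi_a-\phi_b$ with $\phi_i=\theta_i+c^\top x_i$, plus $\kappa$. Then I will read off both the invariance and the characterization of what is identified. I will first treat the scalar case $d=1$ and state the statement there (the multivariate version is identical with $\delta\in\R^d$ and $\theta_i'=\theta_i+\delta^\top x_i$).

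\textbf{Step 1: invariance.} Substitute Eq.~\eqref{eq:invariance} into the linear predictor for an arbitrary ordered pair $(a,b)$:
\[
\theta_a'-\theta_b'+c'(x_a-x_b)+\kappa=\theta_a-\theta_b+\delta(x_a-x_b)+(c-\delta)(x_a-x_b)+\kappa,
\]
and the $\delta$ terms cancel. So every per-comparison probability, and hence the likelihood of any data set and of any design, is unchanged. This holds whether the pair is within-$x$ or cross-$x$, and whatever the number of queries. Because the likelihood is constant along the whole line $\{(\theta+\delta x,\,c-\delta,\,\kappa):\delta\in\R\}$, no estimator based on the likelihood, nor the population likelihood itself, can distinguish points on it. Hence $c$ and $\theta$ are not separately identified. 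I will note one side condition: $x$ must not be constant across items. If it were, $\delta x$ would be an additive constant, already absorbed, and $c$ would simply not appear in the model.

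\textbf{Step 2: what is identified.} I will reparametrize the model as $(\phi,\kappa)$, where the predictor is $\phi_a-\phi_b+\kappa$. Identification of $\kappa$ and of $\phi$ up to an additive constant then reduces to the standard Bradley--Terry argument under two conditions: the comparison graph on items is connected, and some pair is shown in both orders (randomized order, as assumed in the setup). The argument runs as follows.
\begin{itemize}
\item For a pair $(a,b)$ queried in both orders, $\operatorname{logit}\Pr(y=1\mid a,b)+\operatorname{logit}\Pr(y=1\mid b,a)=2\kappa$, so $\kappa$ is determined by the sampling distribution.
\item Given $\kappa$, each queried pair determines $\phi_a-\phi_b$.
\item Connectivity then determines $\phi$ up to a common shift.
\end{itemize}
Conversely, the $(\phi,\kappa)$ map is onto the $(\theta,c,\kappa)$ parametrization with exactly the fibers from Step 1, plus the global shift. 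So nothing finer than $\phi$ modulo constants is identified.

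\textbf{Main obstacle.} The only delicate point is stating the identified quantities precisely rather than the invariance itself, which is a one-line cancellation. Concretely, the conclusion needs the connectivity and order-variation conditions, which I will state explicitly as mild design assumptions. It also needs care in distinguishing ``identified up to an additive constant'' from the prior-free level. I expect no analytic difficulty beyond that.
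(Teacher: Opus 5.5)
Your proposal is correct and follows the paper's proof: substituting the reparametrization into the logit cancels the $\delta$ terms. The paper restates this as the $c$-column $x_a-x_b=\sum_i x_i(e_{a,i}-e_{b,i})$ lying in the span of the item columns, so the design is null along that ray for every comparison set. Your Step 2 is more explicit than the paper: the paper only asserts that order randomization makes $\kappa$'s column independent and checks the two-dimensional null space numerically (rank $30$ of $32$), whereas you derive $\kappa$ from both-order logit sums and $\phi$ up to a shift from connectivity, stating those design assumptions outright.
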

\noindent\emph{Proof.} Substituting~\eqref{eq:invariance} into the logit gives
$(\theta_a{+}\delta x_a)-(\theta_b{+}\delta x_b)+(c{-}\delta)(x_a{-}x_b)+\kappa
=\theta_a-\theta_b+c(x_a-x_b)+\kappa$, so every comparison probability is unchanged.
Equivalently, the design column for $c$, whose entries are $x_a-x_b$, equals
$\sum_i x_i\,(\text{item-indicator column }i)$ and therefore lies in the span of the
$\theta$ columns. Numerically, the design matrix for our $N{=}30$ pools has rank $30$ rather
than $32$: two null directions, the usual Bradley--Terry constant and
exactly~\eqref{eq:invariance}. $\kappa$ \emph{is} identified, because order randomization
makes its column independent of the item block. \hfill$\square$

\paragraph{What the prior does---and the assumption it encodes.}
The Gaussian priors make the MAP unique, but they \emph{choose} a point on the
unidentified ray~\eqref{eq:invariance} rather than letting the data locate it. The
likelihood is flat along two directions---\eqref{eq:invariance} and the Bradley--Terry
constant $\theta\mapsto\theta+\alpha\mathbf{1}$---so a candidate reconstruction is
$\theta=\phi-c\,x+\alpha\mathbf{1}$, and the prior selects \emph{both} free parameters. Profiling
$\alpha$ out of $\tfrac{\lambda}{2}\|\phi-c\,x+\alpha\mathbf{1}\|^{2}+\tfrac{1}{2\tau^{2}}c^{2}$
gives $\alpha^\star=c\bar{x}-\bar{\phi}$, hence $\theta=(\phi-\bar\phi\mathbf{1})-c\,(x-\bar
x\mathbf{1})$ and
\begin{equation}
\hat c_{\mathrm{MAP}}=\frac{\lambda\,\langle \phi-\bar\phi\mathbf{1},\; x-\bar x\mathbf{1}\rangle}
             {\lambda\|x-\bar x\mathbf{1}\|^2+\tau^{-2}}
      \;=\;\frac{\lambda N\,\widehat{\mathrm{Cov}}(\phi,x)}{\lambda N\,\widehat{\mathrm{Var}}(x)+\tau^{-2}},
\label{eq:cstar}
\end{equation}
where the centering is not cosmetic: $x$ is a binary elaboration indicator with mean
$\approx\!\tfrac12$, and using the uncentered inner products misstates $\hat c_{\mathrm{MAP}}$ by up to a
factor of five in our pools. The right reading is the second form: \textbf{the prior attributes
to bias precisely the part of apparent quality that \emph{co-varies} with the covariate},
shrunk by $\tau$. Our estimator is therefore not ``separating quality from
presentation using the data''; it is applying the assumption that \textbf{true quality is a
priori uncorrelated with the presentation covariate}, and reading off the corresponding
decomposition. This assumption is not directly observable without trusted labels; our gate instead tests its
operational consequence---whether the induced correction improves agreement on a small anchor
set. When it
holds the estimate is right, and when it fails the correction removes real signal. The
empirical sweep in Fig.~\ref{fig:corr} confirms this: the benefit of the correction decreases
monotonically with $\mathrm{corr}(\theta,x)$ and changes sign at a point that, in our
simulation setting, sits near $0.45$---which is why the method helps on LLMBar's adversarial
pairs and hurts on SummEval and Nectar. Eq.~\eqref{eq:cstar} also explains \emph{why} that
quantity is the governing one: $\hat c_{\mathrm{MAP}}$ tracks $\widehat{\mathrm{Cov}}(\phi,x)$, so the
correction subtracts real quality exactly to the extent that quality and covariate co-vary.
Restoring genuine identifiability requires extra structure---e.g.\ rendering the same
content at two verbosity levels so a shared $\theta_i$ appears with two different $x$
values, which breaks~\eqref{eq:invariance}; we verify this design below.

\section{Experiments}
\label{sec:exp}
\paragraph{Controlled benchmark.}
To measure \topk{} recovery we need known ground truth, so we construct items whose true
quality is controlled. Each item is an answer to a fixed factual question, assembled from
a bank of \emph{true} and \emph{false} statements; with a fixed statement count per item,
quality $\theta_i$ is the number of correct minus incorrect statements. Independently of
quality we assign a \emph{verbosity} level---terse statements vs.\ the same statements
with an added authoritative-sounding (content-free) elaboration clause---so that length
is decoupled from quality by construction ($\mathrm{corr}(\theta,\text{verbosity})\approx
0$). We use $N{=}30$, $k{=}5$, and each pool is built to contain \emph{exactly} $k{=}5$
fully-correct items ($\theta{=}6$, all statements true), separated by a one-point gap from the
rest ($\theta_{(5)}{=}6 > \theta_{(6)}$ in every pool), so the true \topk{} set is unique and
free of boundary ties. \topk{} recall is the fraction of the $5$ returned items that lie in this
set; equivalently our tie-robust estimator counts item $i$ when $\hat\theta_i$ ranks it in the
top $5$ and its \emph{true} $\theta_i\ge\theta_{(5)}$, which here coincides with membership in
the fully-correct set.

\paragraph{Judges.}
We collect the full matrix of ordered pairwise verdicts from \textbf{fifteen} real LLM
judges spanning open and proprietary families and a wide capability range: open-weight
\textbf{Llama-3.1-8B}, \textbf{Qwen-2.5-Coder-7B}, \textbf{Phi-4} and \textbf{DeepSeek-Coder-6.7B}
(served locally); and API judges \textbf{GPT-4o-mini/5.1},
\textbf{Gemini-2.5-flash-lite/pro}, \textbf{DeepSeek-chat/V4-flash/V4-pro}, and
\textbf{Claude Haiku-4.5/Sonnet-4.6/Opus-4.8}. Collecting the full oracle once lets us
evaluate every acquisition strategy offline against the same judge, with seeds controlling
the strategy's randomization. Both presentation orders are collected for every unordered pair
($N(N{-}1)$ ordered queries); acquisition methods then select among the resulting ordered
queries, so the rule---not a coin flip---decides which item is shown first.

\paragraph{Do real judges exhibit the modeled biases?}
Before the ranking task we probe each bias in isolation with equal-quality pairs. The
verdict is unambiguous: GPT-4o-mini prefers the elaborated answer in $100\%$ of
length-contrasted pairs of equal content, and a markdown-formatted answer in $100\%$ of
format-contrasted pairs, while showing a strong second-position preference. Plain
length \emph{padding} with obvious filler, by contrast, is penalized---verbosity bias is
about apparent substance, not raw length, which is why our covariate is an elaboration
indicator.

\begin{table*}[t]
\centering
\small
\begin{tabular}{lcccccc}
\toprule
Judge & $n$ & $\hat c$ (verbosity) & $\hat\kappa$ (position) & naive recall & bias-aware recall & $\Delta$ \\
\midrule
\multicolumn{7}{l}{\emph{Biased but competent judges --- bias-aware modeling recovers the \topk{}}}\\
Llama-3.1-8B           & 10 & $+1.78$ & $+0.35$ & $0.50$ & $\mathbf{0.82}$ & $+0.32$ \\
Qwen-2.5-Coder-7B      & 10 & $+1.45$ & $+1.08$ & $0.64$ & $\mathbf{0.86}$ & $+0.22$ \\
Gemini-2.5-flash-lite  & 3 & $+1.32$ & $-1.30$ & $0.60$ & $\mathbf{0.87}$ & $+0.27$ \\
DeepSeek-chat          & 5 & $+1.00$ & $-2.40$ & $0.80$ & $\mathbf{1.00}$ & $+0.20$ \\
GPT-4o-mini            & 5 & $+0.98$ & $-0.84$ & $0.56$ & $\mathbf{0.76}$ & $+0.20$ \\
\midrule
\multicolumn{7}{l}{\emph{Frontier / near-neutral judges --- naive already accurate; correction has small effects (mostly negligible, occasionally a small downside)}}\\
DeepSeek-V4-flash      & 3 & $+0.63$ & $-0.08$ & $1.00$ & $1.00$ & $\;\;\,0.00$ \\
Claude-Sonnet-4.6      & 3 & $+0.46$ & $+1.42$ & $1.00$ & $1.00$ & $\;\;\,0.00$ \\
DeepSeek-V4-pro        & 2 & $+0.44$ & $+0.01$ & $1.00$ & $1.00$ & $\;\;\,0.00$ \\
GPT-4.1                & 3 & $+0.38$ & $+1.18$ & $1.00$ & $1.00$ & $\;\;\,0.00$ \\
Phi-4                  & 10 & $+0.34$ & $+0.32$ & $0.86$ & $\mathbf{0.92}$ & $+0.06$ \\
Gemini-2.5-pro         & 1 & $+0.22$ & $-0.31$ & $1.00$ & $1.00$ & $\;\;\,0.00$ \\
GPT-5.1                & 3 & $+0.14$ & $+0.95$ & $1.00$ & $1.00$ & $\;\;\,0.00$ \\
Claude-Opus-4.8        & 1 & $+0.05$ & $+1.36$ & $1.00$ & $1.00$ & $\;\;\,0.00$ \\
Claude-Haiku-4.5       & 3 & $-0.50$ & $+0.05$ & $0.93$ & $0.87$ & $-0.07$ \\
\midrule
\multicolumn{7}{l}{\emph{Judge too weak to be useful --- little quality signal to recover}}\\
DeepSeek-Coder-6.7B    & 5 & $-0.98$ & $+1.74$ & $0.12$ & $0.08$ & $-0.04$ \\
\bottomrule
\end{tabular}
\caption{Fifteen LLM judges, full-data fits with the binary elaboration covariate
($\lambda_b{=}0.1$), sorted by the length-bias coefficient $\hat c$; $\Delta$ is the bias-aware
model's \topk{} recall gain over naive. \emph{Reading $\hat c$:} since the $\theta$/$c$ split is
prior-selected (Prop.~\ref{prop:ident}), $\hat c$ is not a likelihood-identified constant but
the covariate-aligned component of apparent quality (Eq.~\eqref{eq:cstar}), comparable across
rows only because all share the same pools, prior and covariate.}
\label{tab:main}
\end{table*}

\paragraph{Q1: bias modeling recovers the correct \topk{} (correctness).}
Table~\ref{tab:main} gives the central result. The biased judges
(Llama, Qwen, Gemini, DeepSeek, GPT-4o-mini) have positive verbosity coefficients and
naive estimates that correlate with verbosity ($+0.22$ to $+0.49$): naive aggregation is
partly ranking by presentation (Fig.~\ref{fig:boundary}, appendix). There the naive model's \topk{} recall is stuck
and---critically---does \emph{not improve with more comparisons} (it converges to the wrong
set), while the bias-aware model, which attributes the covariate-aligned component of apparent
quality to bias (under the prior of Eq.~\eqref{eq:cstar}), lifts recall by $+0.20$ to $+0.32$, reaching $0.76$--$1.0$, a gap that persists at
full data. The gain tracks the bias magnitude: it shrinks to $+0.06$ for the nearly-neutral
Phi-4, and correction slightly \emph{hurts} the mildly brevity-preferring Claude-Haiku
($0.93\!\to\!0.87$, $\hat c{=}{-}0.50$), a small but honest reminder that modeling a covariate
is not free when the judge barely has that bias. This is the intended pattern: the benefit
holds because the covariate is spurious in these pools; when it is not, correction actively
hurts
(Fig.~\ref{fig:corr}), which is what the gate is for.

\paragraph{Significance and baselines.}
Table~\ref{tab:sig} (appendix) puts the correctness result on a statistical footing, with the
same care about the unit of analysis as the acquisition study. The independent unit is the
\textbf{pool} (one item draw), and---importantly---the \emph{same} pools are reused across
judges (seed $s$ gives identical items), so pool-level differences are not independent across
judges. We therefore test each judge separately at the pool level with an exact paired
permutation test, Holm-corrected across the five judges. The gain is significant on the two
judges with $10$ pools (Llama $+.32$, $p{=}.010$; Qwen $+.22$, $p{=}.016$) and \emph{underpowered
elsewhere}: at $n{=}3$--$5$ pools an exact test cannot reach $p{<}.05$ at all, so the
$+.20$--$.27$ gains on Gemini-flash, GPT-4o-mini and DeepSeek-chat are point estimates we do not
call significant---even where a bootstrap CI excludes zero, which at $n{=}5$ is
anti-conservative. For a single pooled statement we take the \emph{judge} as the cluster
($5$ judge-mean gains), giving $+.24$ with $t$-test $p{=}.0005$; this is far more conservative
than pooling all pool-differences as if independent (which the shared-pool structure forbids)
and is the number we stand behind. On baselines: \emph{order-swap majority vote}, the standard
position-debias, does not help ($.61$, vs.\ naive $.61$) because it leaves verbosity
uncorrected, while \emph{post-hoc length residualization} matches the bias-aware model at full
data ($.85$ vs.\ $.85$). We therefore claim no full-data accuracy advantage over a well-chosen
residualization heuristic; the joint model's value is that it (i)~handles verbosity \emph{and}
position in one estimator, (ii)~supplies the calibrated posterior the active rule needs, and
(iii)~admits a tunable shrinkage that reduces, though does not eliminate, the downside on
unknown judges---none of which a post-hoc
residual provides.

\paragraph{Q2: \topk-aware acquisition (secondary).}
Given a usable posterior, our \topk-aware rule targets the membership boundary rather than the
full ranking. Tested at the pool level (seeds within a pool share its frozen verdict matrix and
are not independent), it beats every baseline---Thompson and LUCB included---on the two judges
with enough pools to support a test (Holm $p\le.023$), and leads on the rest where $3$--$5$
pools cannot reach significance. Its edge over Thompson/LUCB is specific to the frozen protocol
(under a stochastic judge it still beats round-robin and D-optimality but no longer separates
from the bandits). As a heuristic with narrow scope we treat it as secondary and defer the
budget curves, baselines, stochastic-judge robustness, score ablation and $2{\times}2$ synergy
to the supplementary material.

\paragraph{External validity on LLMBar.}
Our controlled benchmark isolates spurious verbosity by construction; LLMBar
\citep{zeng2024llmbar} tests the same phenomenon in the wild, with human-annotated pairs
whose \emph{adversarial} subsets are built so the worse answer is superficially
attractive. We judge all $419$ pairs with GPT-4o-mini in both orders and ask whether
modeling the presentation features recovers the gold preference. Fig.~\ref{tab:llmbar}
shows the judge is genuinely length-biased on the adversarial subsets---it prefers the
longer answer far more often than gold does (e.g.\ Neighbor: prefers-longer $0.40$ vs.\
gold-longer $0.13$)---whereas on the Natural subset length is legitimate ($0.55$ vs.\
$0.56$). Correcting for length and format (a cross-validated logistic model on the judge
score plus length/format differences, the pairwise analogue of our covariate model)
improves agreement with gold on exactly the biased subsets---Neighbor $0.58\!\to\!0.86$,
Manual $0.63\!\to\!0.80$---while leaving the unbiased Natural subset unchanged
($0.94\!\to\!0.93$). This is direct evidence that modeling the spurious feature recovers
ground truth when the feature is in fact spurious. We are explicit about what this does
\emph{not} show: the correction here is fitted using \emph{supervision} from gold labels, so it
validates the mechanism---the judge's length preference is spurious on these pairs and
conditioning on length recovers the human verdict---rather than the unsupervised top-$k$
pipeline. Given Prop.~\ref{prop:ident}, that separation is the honest reading: with gold
labels the split is identified by supervision; without them it is chosen by the prior.

\begin{figure}[t]
\centering
\includegraphics[width=0.86\columnwidth]{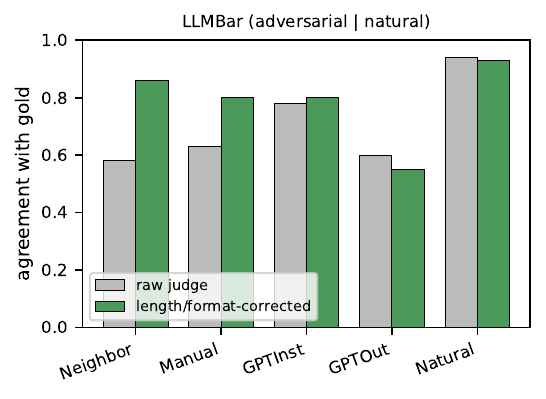}
\caption{External validity on LLMBar (judge: GPT-4o-mini, both orders). On the adversarial
subsets---where the judge prefers the longer answer far more than gold does (e.g.\ Neighbor
$0.40$ vs.\ $0.13$)---a length/format correction recovers gold agreement (Neighbor
$0.58\!\to\!0.86$); the unbiased Natural subset is unchanged. This is the positive
counterpart to the SummEval limitation, where length legitimately signals quality.}
\label{tab:llmbar}
\end{figure}

\paragraph{Restoring identifiability: a paired rendering design.}
Prop.~\ref{prop:ident} says the flat direction exists because each item carries both its own
$\theta_i$ and a single fixed $x_i$. Rendering the \emph{same} content at two verbosity levels
removes it: a shared $\theta_i$ then appears at two covariate values, so a within-content
comparison has the $\theta$ terms cancel and its logit reduces to
$c\,(x_{\text{verbose}}-x_{\text{terse}})+\kappa$---a direct read on $c$. We built such pools
($15$ base contents $\times$ terse/verbose $=30$ items) and collected the full verdict matrix
from \emph{four} open judges $\times$ $6$ pools each ($24$ pools, $20{,}880$ verdicts), which
lets us compare the two \emph{analyses of identical data}: an unpaired model that gives every
rendering its own $\theta$, versus the paired model that ties the two renderings of a base to
one $\theta$.

The structural fix is unambiguous and reproduces on every pool. The unpaired design carries
\textbf{two} null directions and the paired design exactly \textbf{one}---the unavoidable
Bradley--Terry constant---in $24/24$ pools across all four judges. The decisive measurement is
the \emph{profile} likelihood $\max_{\theta,\kappa}\ell(\theta,c,\kappa)$ at fixed $c$, which
is the right test because non-identifiability means $\theta$ can absorb any shift in $c$:
under the unpaired design, moving $c$ by $0.5$ and re-optimizing $\theta$ costs
$\mathbf{0.000}$ \textbf{nats} on all $24$ pools---the data are not merely uninformative about
$c$, they are \emph{exactly} silent---whereas under the paired design the profile is curved
and a prior-free maximum-likelihood estimate of $c$ exists.
(An earlier version of this test that shifted $c$ \emph{without} re-optimizing $\theta$
reports a large likelihood drop even in the unpaired design; that number measures nothing, as
the compensating $\theta$ shift is exactly what the invariance provides.)

\paragraph{No amount of data substitutes for the design.}
A natural hope is that non-identifiability is really a small-sample problem that more
comparisons would cure. It is not, and the paired design lets us show this sharply. We swept
the pool size over $8, 15, 25, 40$ base contents ($4$ judges $\times$ $3$ seeds each; the
largest pools are $80$ items and $6{,}320$ ordered comparisons) and measured the profile
likelihood range over $c\in[\hat c{-}1,\hat c{+}1]$---the total evidence the data carry about
the split. In the unpaired design it is $\mathbf{0.0000}$ \textbf{nats at every size}, in
$48/48$ pools: multiplying the comparisons $26\times$ buys \emph{exactly zero} information
about $c$ (Table~\ref{tab:paired}, top). In the paired design the same sweep gives
$10.8 \to 30.8 \to 75.3 \to 172.7$ nats. Identifiability is a property of \emph{what you
compare}, not of \emph{how much} you compare, and this is the practical reason the paired
design matters: it is the only one of our interventions that restores likelihood
identification rather than adding an assumption.

\paragraph{What kind of prior dependence remains.}
The paired estimate is still not prior-\emph{free} in practice, and it is worth being precise
about which of two very different dependencies survives. Sensitivity of $\hat c$ to its own
prior $\tau$ falls $3.2\times$ ($0.402\!\to\!0.124$, $p{<}10^{-12}$), but sensitivity to the
\emph{quality} prior $\lambda$ falls only $1.5\times$ even at $40$ bases. That residue,
however, is ordinary ridge attenuation rather than the pathology. The decisive check is that
in the paired design a prior-free maximum-likelihood estimate $\hat c_{\text{MLE}}$
\emph{exists}, and a weak-prior MAP recovers it to within $0.092$ on average
($\lambda{=}0.2$, $12$ pools at $40$ bases); increasing $\lambda$ then pulls the estimate
monotonically toward zero ($0.30$ at $\lambda{=}1$, $0.66$ at $\lambda{=}5$)---the textbook
signature of shrinkage bias, which shrinks with $\lambda$ and vanishes with $n$. The
per-judge pattern confirms the mechanism: DeepSeek-Coder, whose verdicts are nearly a
function of elaboration alone and whose profile is therefore sharply curved, barely moves at
all ($-1.234\!\to\!-1.211$ as $\lambda$ goes from $0.2$ to $5$), whereas Llama, for whom $c$
is a small correction to a mostly quality-driven verdict, moves most. In the unpaired design
this comparison cannot even be posed: the profile is exactly flat, no $\hat c_{\text{MLE}}$
exists, and $100\%$ of the estimate is the prior's choice. So the paired design does not
merely reduce prior dependence---it converts a dependence no amount of care can remove into
one a practitioner controls by not over-shrinking $\theta$.

\begin{table}[t]
\centering\small
\setlength{\tabcolsep}{4pt}
\begin{tabular}{lcccc}
\toprule
\multicolumn{5}{l}{\textbf{(a) Evidence the data carry about $c$} (profile range, nats)}\\
base contents & $8$ & $15$ & $25$ & $40$ \\
\midrule
unpaired design & $\mathbf{.0000}$ & $\mathbf{.0000}$ & $\mathbf{.0000}$ & $\mathbf{.0000}$ \\
paired design & $10.8$ & $30.8$ & $75.3$ & $\mathbf{172.7}$ \\
\midrule
\multicolumn{5}{l}{\textbf{(b) Is the residue the pathology, or just shrinkage?}}\\
\multicolumn{5}{l}{$|\hat c_{\text{MAP}}(\lambda) - \hat c_{\text{MLE}}|$, paired, $40$ bases}\\
$\lambda$ & \multicolumn{1}{c}{$0.2$} & \multicolumn{1}{c}{$1.0$} & \multicolumn{1}{c}{$5.0$} & \\
\midrule
deviation from prior-free MLE & $\mathbf{.09}$ & $.30$ & $.66$ & \\
\multicolumn{5}{l}{\emph{unpaired}: $\hat c_{\text{MLE}}$ does not exist---profile exactly flat} \\
\bottomrule
\end{tabular}
\caption{The paired design adds information; more data does not. \textbf{(a)}~Profile-likelihood
range over $c$ (evidence the comparisons carry about the split); \textbf{(b)}~$\hat c$'s
deviation from its prior-free MLE across prior strengths $\lambda$. Discussed in the text.}
\label{tab:paired}
\end{table}

\begin{figure}[t]
\centering
\includegraphics[width=0.86\columnwidth]{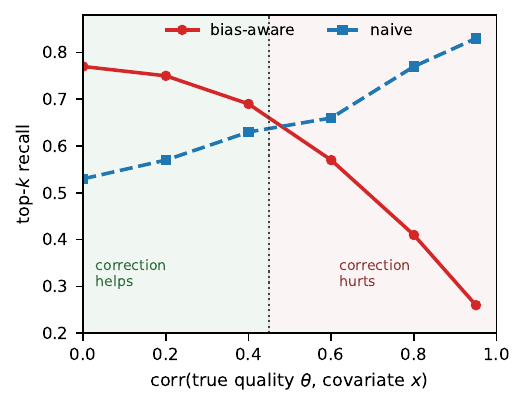}
\caption{The prior assumption governs the outcome. Sweeping the true $\mathrm{corr}(\theta,x)$
in a synthetic judge ($c^\star{=}1$, $30$ runs/point), the correction's benefit falls
monotonically and changes sign near $0.45$---helping when the covariate is spurious and hurting
when it tracks quality.}
\label{fig:corr}
\end{figure}

\paragraph{Gating: when to enable correction.}
Because a covariate can be spurious (LLMBar) or legitimate (SummEval), the practitioner needs
an operational rule for whether to turn correction on. We propose a \emph{trusted-anchor
gate}: obtain a small set of ground-truth pairwise labels (from a strong judge or a few human
comparisons) and enable the covariate correction only if the bias-aware model agrees with
these anchors at least as well as naive. Evaluating a safety claim demands care about what a
run of zeros can support, so we depart from a single anchor draw on one judge: we resample the
anchor set $100$ times per pool across $5$ judges and $60$ pools ($6{,}000$ gate decisions per
anchor budget), and report Clopper--Pearson intervals with the \emph{pool} as the independent
unit (Table~\ref{tab:gate}).

The gate is strongly safety-biased but not infallible. False enables are rare and confined to
the smallest budget---$2$ of $6{,}000$ decisions at $K{=}5$ (touching $2$ of $60$ pools), none
at $K\ge10$---so our earlier claim that $5$ anchors suffice for safety does not survive the
stronger test. Two honest qualifications follow. First, zero \emph{observed} failures is not a
zero rate: $0/60$ pools bounds the pool-level false-enable rate at $\le6.0\%$
(Clopper--Pearson), not at $0$. What holds is that across five judges the gate never enabled on
a legitimate covariate at any $K\ge10$, the behavior that avoids the SummEval/Nectar harm.
Second, capturing the \emph{gain} costs labels: the spurious-pool enable-rate climbs from
$22\%$ ($K{=}5$) to $59\%$ ($K{=}40$), and $40$ anchors---$9\%$ of an $N{=}30$ pool's pairs,
$\approx$\$$0.14$ at strong-judge prices---is about an order of magnitude more than the cheap
judge's own $\approx$\$$0.014$ evaluation (Fig.~\ref{fig:cost}, appendix) for a single pool, but amortizes
across reused pools. The
reading is: the gate buys real protection, its residual risk is bounded at $\le6\%$ rather than
shown to be zero, and recovering most of the gain costs about an order of magnitude more than the
cheap-judge evaluation for a single pool, but amortizes across reused pools.

\begin{table}[t]
\centering\small
\setlength{\tabcolsep}{3pt}
\begin{tabular}{lccccc}
\toprule
anchors $K$ & $5$ & $10$ & $20$ & $40$ & $60$ \\
\midrule
enable-rate, spurious pools & $22\%$ & $34\%$ & $49\%$ & $\mathbf{59\%}$ & $51\%$ \\
enable-rate, legitimate pools & $0\%$ & $0\%$ & $0\%$ & $0\%$ & $0\%$ \\
\midrule
false enables / $6{,}000$ decisions & $2$ & $\mathbf{0}$ & $\mathbf{0}$ & $\mathbf{0}$ & $\mathbf{0}$ \\
pools with $\ge1$ false enable & $2/60$ & $0/60$ & $0/60$ & $0/60$ & $0/60$ \\
\;\;CP $95\%$ upper bound & $11.5\%$ & $6.0\%$ & $6.0\%$ & $6.0\%$ & $6.0\%$ \\
\midrule
gated recall, spurious & $.67$ & $.70$ & $.73$ & $\mathbf{.76}$ & $.74$ \\
\;\;(oracle-on / best possible) & \multicolumn{5}{c}{$.85$ / $.85$} \\
gated recall, legitimate & $.61$ & $.61$ & $.61$ & $.61$ & $.61$ \\
\bottomrule
\end{tabular}
\caption{Trusted-anchor gate: anchor sets resampled $100\times$ per pool across $5$ judges and
$60$ pools ($6{,}000$ decisions per column). Enable-rate on spurious/legitimate pools,
false-enable count with Clopper--Pearson bound, and gated recall vs.\ anchor budget $K$.}
\label{tab:gate}
\end{table}

\paragraph{Q3: where does the method help?}
The bias-aware gain $\Delta$ is strongly negatively correlated with a judge's raw competence
(naive recall) across the $14$ judges with quality to recover: Spearman $\rho{=}{-}0.84$
($p{<}10^{-3}$; partly a ceiling artifact, since a judge at recall $1.0$ can only have
$\Delta{=}0$). We exclude the one \emph{sub-competent} judge (Coder-6.7B, naive recall $0.12$)
from this correlation: it has almost no quality signal, so neither model recovers a \topk{}
and its $\Delta{=}{-}0.04$ is a small artifact rather than a failure of the trend. The method
thus helps most on judges competent enough to rank but corrupted by bias, and is a no-op on
those that already rank perfectly. The
practically relevant regime is clear: frontier judges \emph{in our set} rank well but cost
$10$--$100\times$ more per comparison, while the cheap judges at-scale evaluation actually uses
are the ones our model helps (Fig.~\ref{fig:capability}, appendix).

\section{Discussion and Conclusion}
Bias across our fifteen judges is \emph{heterogeneous}---common but neither universal nor
capability-independent, with position bias flipping direction---and because each $\hat c$ is
prior-selected, Table~\ref{tab:main} supports a \emph{relative} reading across judges, not an
absolute coefficient. The larger point is that ``debiasing'' an LLM judge from comparisons
alone is not estimation but an assumption the prior selects; the useful questions are when it
holds---which we characterize via $\mathrm{corr}(\theta,x)$---and what to do when it does not:
gate on trusted anchors, or use a paired design that makes the split estimable. A per-judge-bias
\emph{multi-fidelity} escalation, spending a strong judge only at the \topk{} boundary, is a
natural next step; naively mixing two judges in a single-bias model backfires, since their
biases differ.

\bibliography{aaai2027}

\newpage
\appendix
\section*{Technical Appendix}

\subsection*{Limitations}
\paragraph{Limitations.}
Four caveats bound our claims. (i)~\emph{The covariate must capture a spurious preference.}
Our controlled benchmark isolates content-free verbosity; on data where length legitimately
signals quality this assumption breaks. On SummEval---real machine summaries with human
quality scores---raw summary length correlates with the human score ($\hat c{=}{+}0.79$),
and debiasing on it \emph{hurts} \topk{} recovery (naive $0.52$ vs.\ bias-aware $0.43$ over
$20$ articles). This is the mirror image of the LLMBar result (Fig.~2), where
length is spurious and modeling it helps: the method is warranted precisely when the
covariate captures a presentation preference that is plausibly \emph{not} a quality signal.
A second real multi-candidate top-$k$ task, Nectar ($7$ answers per prompt with GPT-4
rankings), behaves like SummEval: length legitimately tracks the ground-truth ranking (the
GPT-4 ranker itself rewards detail; $\hat c{=}{+}0.53$), so blind debiasing again slightly
hurts (top-$2$ recall $0.58\!\to\!0.52$). The pattern is consistent---the method helps only
when the ground truth does \emph{not} share the judge's presentation preference (LLMBar's
human gold), and hurts when it does (SummEval, Nectar). The trusted-anchor gate above makes
this decision operational, avoiding the SummEval/Nectar-type harm while keeping the
LLMBar-type gain.
(ii)~\emph{The acquisition advantage is protocol-dependent.} Our main protocol caches one
verdict per ordered pair, which isolates systematic bias but is not how an API judge
behaves ($43\%$ of pairs are genuinely uncertain under the judge's own logits; at
temperature $1$, GPT-4o-mini flips $8\%$ of verdicts). Under a stochastic oracle with
re-queries ($10$ pools), our rule still beats round-robin and D-optimality but no longer
separates from Thompson sampling or LUCB at any budget, so the part of its advantage that is
specific to strong bandit baselines belongs to the repeat-free protocol rather than to the
rule. The bias-correction results, which use the full matrix, are unaffected by this caveat.
(iii)~\emph{Small samples for frontier judges.} Gemini-2.5-pro and Opus-4.8 ($n{=}1$ each), and V4-pro
($n{=}2$) have few oracles; their perfect-recall entries mean ``no bias detected in a small
sample,'' not a strong guarantee. (iv)~Statements about frontier judges being unbiased hold
only for the specific models and prompt we tested.

\subsection*{Deferred method details and experiments}
\begin{figure}[t]
\centering
\includegraphics[width=0.86\columnwidth]{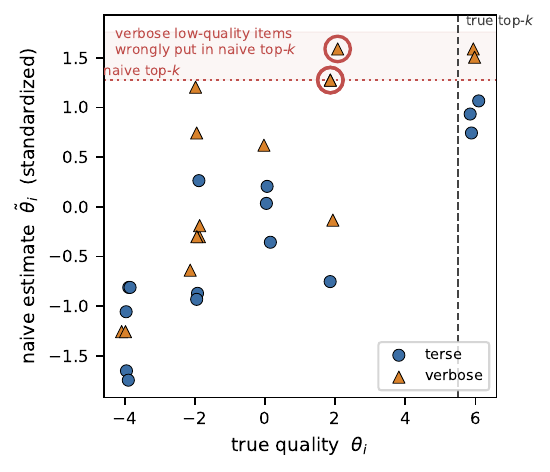}
\caption{Why naive aggregation fails (Llama judge, full data). Each item is placed at its
true quality $\theta_i$ (x) and its naive quality estimate $\tilde\theta_i$ (y). Verbosity
inflates the naive estimate, so verbose low-quality items (circled) rise above the naive
\topk{} cut-off (dotted) while a terse truly-top item falls below it: the naive ranking
selects on presentation, realizing Prop.~\ref{prop:naive}. The bias-aware model removes the
vertical verbosity shift and recovers the true \topk{} (right of the dashed line).}
\label{fig:boundary}
\end{figure}
\begin{table}[t]
\centering\small
\setlength{\tabcolsep}{3pt}
\begin{tabular}{lccccc}
\toprule
Biased judge & $n$ & naive & bias-aw. & $\Delta$ & boot.\ 95\% CI \\
\midrule
Llama-3.1-8B  & 10 & $.50$ & $.82$ & $+.32^{*}$ & $[+.24,+.40]$ \\
Qwen-2.5-Coder-7B & 10 & $.64$ & $.86$ & $+.22^{*}$ & $[+.16,+.28]$ \\
Gemini-flash  & 3  & $.60$ & $.87$ & $+.27^{\dagger}$ & $[+.20,+.40]$ \\
GPT-4o-mini   & 5  & $.56$ & $.76$ & $+.20^{\dagger}$ & $[+.08,+.32]$ \\
DeepSeek-chat & 5  & $.80$ & $1.00$ & $+.20^{\dagger}$ & $[+.04,+.40]$ \\
\midrule
\multicolumn{6}{l}{\emph{Debiasing baselines (mean over the biased judges):}}\\
order-swap+maj. & \multicolumn{5}{c}{$.61$ \; (vs.\ naive $.61$)}\\
length-resid. & \multicolumn{5}{c}{$.85$ \; ($\approx$ bias-aware $.85$)}\\
\bottomrule
\end{tabular}
\caption{Correctness with statistics, \textbf{pool} as the unit of analysis. $\Delta$ is the
per-pool bias-aware$-$naive gain (mean; bootstrap $95\%$ CI over pools). $p$-values are exact
paired permutation tests, Holm-corrected across the five judges: $^{*}p{<}.05$ (Llama
$.010$, Qwen $.016$). $^{\dagger}$: \textbf{underpowered}---with $n$ pools an exact test cannot
return $p{<}2/2^{n}$, i.e.\ $.063$ at $n{=}5$ and $.25$ at $n{=}3$, so significance is
unobtainable, not absent (point estimates favor bias-aware on all five). DeepSeek-chat is the
case to read carefully: its bootstrap CI excludes $0$ yet its exact $p$ is $.25$---the
percentile bootstrap is anti-conservative at $n{=}5$ while the exact test cannot reject, so we
treat it as not established. \emph{Pooled} across judges is delicate because all judges reuse
the \emph{same} item pools (seed $s\!\to\!$ identical items), so pool differences are not
independent across judges; taking the \textbf{judge} as the cluster ($5$ judge-mean $\Delta$s)
gives mean $+.24$, $t$-test $p{=}.0005$---far more conservative than a naive
oracle-level pooling would suggest. Order-swap majority vote (the standard position-debias)
leaves verbosity uncorrected; length residualization matches bias-aware at full data---its
advantage is the joint estimate, calibrated uncertainty for acquisition, and adaptive
shrinkage.}
\label{tab:sig}
\end{table}
\begin{figure}[t]
\centering
\includegraphics[width=0.99\columnwidth]{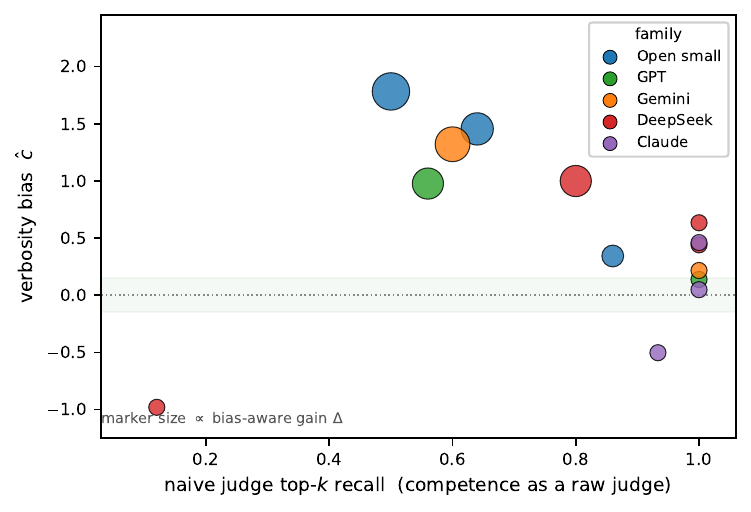}
\caption{Where the method helps (our tested judges). Each point is a judge, colored by
model family, at its naive \topk{} recall (x, its raw competence) and verbosity coefficient
$\hat c$ (y); marker size grows with the bias-aware gain $\Delta$. The gain concentrates on
competent-but-biased judges (top-left, large markers; Spearman $\rho{=}{-}0.84$ over the
$14$ competent judges, between naive recall and $\Delta$); the frontier judges we tested already rank perfectly with $\hat
c\!\approx\!0$, and the weakest judge (bottom-left) has too little quality signal to help.
Within-family trends are visible (e.g.\ Gemini flash-lite $\to$ 2.5-pro). As in
Table~\ref{tab:main}, $\hat c$ is a prior-selected decomposition, not a
likelihood-identified judge constant (Prop.~\ref{prop:ident}); the $y$-axis is comparable
across judges only because every point uses the same pools, prior and covariate, and should
not be read as an absolute bias measurement.}
\label{fig:capability}
\end{figure}

\begin{figure}[t]
\centering
\includegraphics[width=0.92\columnwidth]{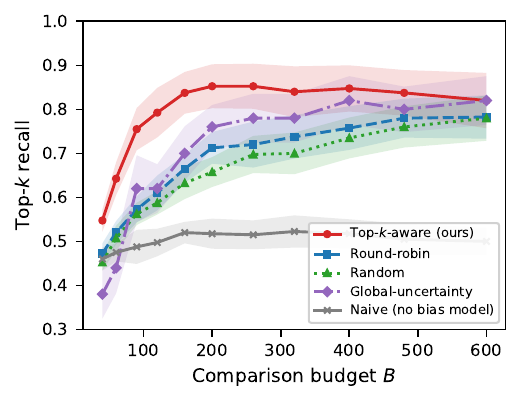}
\caption{\topk{} recall vs.\ comparison budget (Llama-3.1-8B judge). Curves are means over
$10$ pools $\times\,8$ acquisition seeds. \textbf{Error bars are $\pm$s.e.\ across the $10$
pools}, computed after averaging seeds within each pool---seeds share a pool's frozen oracle,
so a seed-level s.e.\ would be pseudo-replicated and roughly $\sqrt{8}$ too narrow. Curve
separation here is therefore descriptive; the pool-level tests are in
Table~\ref{tab:acqbase}. The \topk-aware rule reaches high recall with the fewest comparisons;
global-uncertainty is worst at small budgets; the naive model (no bias term) plateaus at a
wrong \topk{} regardless of budget.}
\label{fig:budget}
\end{figure}
\begin{figure}[t]
\centering
\includegraphics[width=0.86\columnwidth]{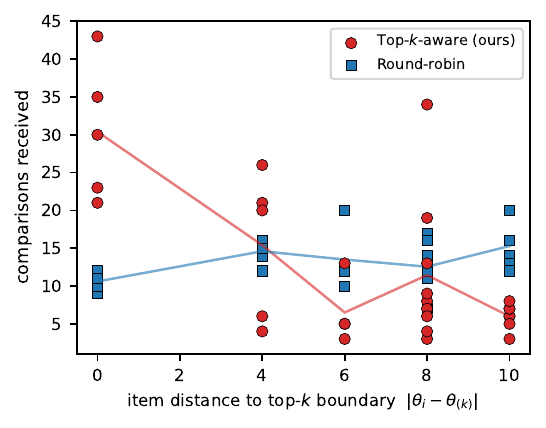}
\caption{How the budget is spent (Llama judge, $B{=}200$). \topk-aware acquisition
concentrates comparisons on items near the \topk{} boundary ($\sim$$3\times$ more than
round-robin at distance $0$) and starves items far from it, whereas round-robin spreads
effort uniformly. This is the mechanism behind the budget savings in
Fig.~\ref{fig:budget}.}
\label{fig:acq}
\end{figure}
\begin{figure}[t]
\centering
\includegraphics[width=0.82\columnwidth]{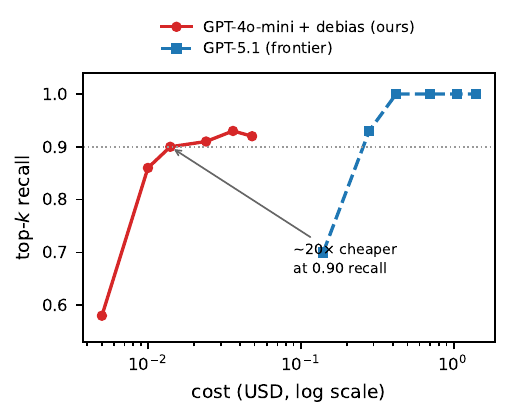}
\caption{Cost--quality frontier \emph{for this workload and this pricing snapshot}. A cheap
biased judge (GPT-4o-mini) with our correction reaches $0.89$ recall (at its best budget $B\!\approx\!200$) for
$\approx$$20\times$ less than a frontier judge (GPT-5.1). The ratio is an instance, not a
constant: it is computed from our prompts (a comparison averages $\approx$$293$ input tokens, range
$165$--$427$, estimated at $1.3$ tokens/word, with \texttt{max\_tokens}$=16$ on output and no
reasoning tokens) at July 2026 list prices, and it moves with
prompt length, output budget, vendor pricing, and whether the frontier judge spends hidden
reasoning tokens---Gemini-2.5-pro, which spends $\approx$$500$, is far costlier per verdict
than its list price suggests. Longer answers or a reasoning-heavy cheap judge would compress
the gap; the qualitative point (frontier judges buy the last few points of recall at a large
multiple) is what we rely on.}
\label{fig:cost}
\end{figure}

\paragraph{What the covariates capture.}
The vector $x_i$ collects the presentation attributes a judge might reward independently
of quality. Our controlled experiments use this single binary elaboration indicator, but the same slot
admits response length in tokens, a formatting indicator (markdown/structure), or a
self-preference feature (whether a candidate comes from the judge's own model family);
each adds one coefficient $c_m$ and one column to the design. Because Eq.~\eqref{eq:model}
depends only on the \emph{difference} $x_a-x_b$, the model is a logistic regression of the
verdict on paired quality-indicator and covariate-difference features---which is what
makes estimation convex and gives each $c_m$ a clean reading as the number of preference
logits that one unit of the feature buys. The position term $\kappa$ is a special
covariate that is a property of the \emph{query} (which slot an item occupies) rather than
of the item; collecting both presentation orders makes it separable from quality
(see~\emph{Inference}).

\paragraph{Inference.}
Write $\beta=(\theta,c,\kappa)$ and let $z_{ab}=(e_a-e_b,\,x_a-x_b,\,1)$ be the design
vector of a comparison, so Eq.~\eqref{eq:model} is $\Pr(y=1)=\sig(\beta^\top z_{ab})$.
Given revealed comparisons $\mathcal{D}$ we minimize the penalized negative
log-posterior
\begin{equation}
\begin{aligned}
\mathcal{L}(\beta)= &-\!\!\sum_{(a,b,y)\in\mathcal{D}}\!\!\big[y\log\sig(\beta^\top z_{ab})+(1{-}y)\log\sig(-\beta^\top z_{ab})\big]\\
&+\tfrac{\lambda}{2}\|\theta\|^2+\tfrac{1}{2\tau^2}(\|c\|^2+\kappa^2),
\end{aligned}
\label{eq:obj}
\end{equation}
which is convex in $\beta$; we solve it with L-BFGS (warm-started across acquisition
steps). The Laplace posterior is $\beta\approx\mathcal{N}(\hat\beta,H^{-1})$ with
$H=\nabla^2\mathcal{L}(\hat\beta)=\sum_{\mathcal{D}}\pi_{ab}(1{-}\pi_{ab})\,z_{ab}z_{ab}^\top+\Lambda$,
where $\pi_{ab}=\sig(\hat\beta^\top z_{ab})$ and $\Lambda$ is the prior precision; the
quality block of $H^{-1}$ gives $\Sigma$. This is an $(N{+}d{+}1)$-dimensional convex
problem and yields both the point ranking and the uncertainty the acquisition rule
needs. The Monte-Carlo \topk{}-membership probabilities $p_i$ are read off $S$ samples
from $\mathcal{N}(\hat\theta,\Sigma)$. A refit therefore costs, consistent with the L-BFGS solver above,
$O\!\big(T\,|\mathcal{D}|\,(N{+}d)\big)$ for the $T$ optimization steps, plus a \emph{single}
Laplace-covariance construction: $O\!\big(|\mathcal{D}|\,(N{+}d)^2\big)$ to form the Hessian
at the mode and $O\!\big((N{+}d)^3\big)$ to factor/invert it for $\Sigma$---the cubic term,
which we do not hide---plus $O(SN\log N)$ for the membership samples and $O(N^2)$ to score
candidate pairs. We refit
every $8$ acquisitions rather than every step, so the amortized per-acquisition cost is one
eighth of the above. Concretely, a full $N{=}30$ run (Laplace refits plus scoring, $600$
acquisitions) takes a few seconds on one CPU core, and our $N{=}100$ pools remain
interactive; the cubic term is what would bite first at $N$ in the high hundreds, where
low-rank or block updates of $\Sigma$ (rather than full inversion) would be the natural
remedy. We do not claim to have validated $N{=}300$.

\paragraph{Posterior summaries.}
Two summaries drive the method. The returned answer is the point set
$\arg\!\topk_i\hat\theta_i$. The acquisition rule instead needs the \topk-membership
probabilities $p_i=\Pr\!\big(i\in S_k(\theta)\big)$, which have no closed form; we estimate
them by drawing $S$ samples $\theta^{(s)}\sim\mathcal{N}(\hat\theta,\Sigma)$ and counting
how often item $i$ lands in the top $k$, a Monte-Carlo estimate with $O(1/\sqrt S)$ error
($S{=}1500$ was ample for stable values). The Laplace approximation is well suited here:
the Bradley--Terry log-posterior is concave and, as comparisons accumulate, increasingly
Gaussian around the mode, so a single mode-plus-curvature fit captures the uncertainty the
acquisition needs at a fraction of the cost of MCMC.

\paragraph{Presentation order is part of the query, not an afterthought.}
Order deserves care, because $\hat\kappa$ reaches $|\hat\kappa|\approx1$--$2.4$ on some judges
and a rule that ignored it would be scoring the wrong quantity. We therefore let the
acquisition range over \emph{ordered} pairs: the candidate set is
$\{(i,j): i\neq j\}$, selecting $(i,j)$ means ``show $i$ first,'' and $\hat p_{ij}$ in
Eq.~\eqref{eq:acq} is evaluated at exactly that order, with $+\hat\kappa$ for the first-shown
item. Score and query therefore refer to the same object, and the rule may exploit
$\hat\kappa$ by choosing which item leads. This is the alternative to fixing the unordered
pair and randomizing the order afterwards---a design under which the information term would
have to be averaged over both orders,
$\tfrac12 w(i\text{ first})+\tfrac12 w(j\text{ first})$, since the two carry different Fisher
weight whenever $\kappa\neq0$; scoring one nominal order and then flipping it at random would
be incoherent. What keeps $\kappa$ identifiable is not randomization per se but that order
varies \emph{within} a pair across the collected data: our verdict matrices contain
\emph{both} orders of every unordered pair ($N(N{-}1)$ ordered queries), so disagreement
between $(i,j)$ and $(j,i)$ pins $\kappa$ down independently of $\theta$.

\section{Top-$k$-Aware Active Acquisition}
A budget should be spent where it changes the \emph{answer}---the membership of the
\topk{} set---not where it merely sharpens the global ranking. Let
$p_i = \Pr\big(i \in S_k(\theta)\big)$ be the posterior probability that item $i$ is in
the \topk{}, estimated by Monte Carlo from $\mathcal{N}(\hat\theta,\Sigma)$. Items with
$p_i$ near $0$ or $1$ are settled; items with $p_i$ near $\tfrac12$ sit on the boundary
and are where comparisons are valuable. We score a candidate ordered pair $(i,j)$ by
\begin{equation}
\begin{aligned}
s(i,j) = \;&\underbrace{\hat p_{ij}(1-\hat p_{ij})}_{\text{informativeness}}\;
\underbrace{\big(\Sigma_{ii}+\Sigma_{jj}-2\Sigma_{ij}\big)}_{\Var(\theta_i-\theta_j)}\\[-1pt]
&\times\;\underbrace{\big(\mathcal{H}(p_i)+\mathcal{H}(p_j)\big)}_{\text{boundary relevance}},
\end{aligned}
\label{eq:acq}
\end{equation}
where $\hat p_{ij}=\sig\big(\hat\theta_i-\hat\theta_j+\hat c\,(x_i-x_j)+\hat\kappa\big)$ is
the judge's \emph{predicted} response probability under the full model of
Eq.~\eqref{eq:model} (not a bias-free surrogate), $\Sigma$ is the quality block of the
\emph{joint} Laplace covariance $H^{-1}$---so it already marginalizes the nuisance
parameters $(c,\kappa)$ and their covariance with $\theta$---and $\mathcal{H}$ is the binary
entropy. The first two factors are the usual one-step uncertainty/D-optimal term---the
expected information about $\theta_i-\theta_j$; the third factor focuses that effort on
pairs whose \topk{} membership is still in doubt. Dropping the third factor recovers a
\emph{global-uncertainty} rule, which we show wastes budget on clearly-in or
clearly-out items. We greedily query $\arg\max_{(i,j)} s(i,j)$ over \emph{ordered} pairs,
reveal the judge's verdict, refit, and repeat (Alg.~\ref{alg:main}).

\begin{algorithm}[t]
\caption{Bias-Aware Bayesian Active Top-$k$. Shown with refit interval $R{=}1$ (refit every step) for clarity; our experiments refit every $R{=}8$ acquisitions.}
\label{alg:main}
\begin{algorithmic}[1]
\STATE \textbf{Input:} items with covariates $x$, budget $B$, target $k$
\STATE $\mathcal{D}\leftarrow\emptyset$
\FOR{$t=1$ to $B$}
  \STATE fit $(\hat\theta,\hat c,\hat\kappa,\Sigma)$ on $\mathcal{D}$ (Laplace)
  \STATE estimate $p_i=\Pr(i\in S_k)$ by sampling $\mathcal{N}(\hat\theta,\Sigma)$
  \STATE pick $(i,j)\leftarrow\arg\max s(i,j)$ over unqueried \emph{ordered} pairs [Eq.~\eqref{eq:acq}]
  \STATE show $i$ first, $j$ second; observe judge verdict $y$
  \STATE $\mathcal{D}\leftarrow\mathcal{D}\cup\{(i,j,\text{first}{=}i,y)\}$
\ENDFOR
\STATE \textbf{return} $\arg\!\topk_i \hat\theta_i$
\end{algorithmic}
\end{algorithm}

\paragraph{Motivation for the acquisition score.}
The score~\eqref{eq:acq} is a heuristic, not an exact expected-information rule, but it has
a clear rationale in two factors. First, adding a comparison at $(i,j)$ updates the quality
precision by the rank-one term $\pi_{ij}(1{-}\pi_{ij})(e_i-e_j)(e_i-e_j)^\top$, so by
Sherman--Morrison the posterior variance of the contrast $\theta_i-\theta_j$ shrinks by an
amount that grows with $\pi_{ij}(1{-}\pi_{ij})\,\Var(\theta_i-\theta_j)$---this is the usual
one-step information (D-optimality) term. Second, we weight that term by the
\emph{boundary-uncertainty} factor $\mathcal{H}(p_i)+\mathcal{H}(p_j)$: the binary entropy
$\mathcal{H}(p)$ is maximal when an item's \topk{}-membership probability is $p\!\approx\!\tfrac12$
(its membership is genuinely in doubt and a single comparison can flip it) and vanishes for
items that are already clearly in or out. The product thus steers comparisons to informative
\emph{and} decision-relevant pairs. Dropping the boundary factor recovers the global
D-optimal rule, which spends precision on already-settled items and is therefore wasteful
for \topk{} identification---matching the poor small-budget behavior of global-uncertainty
in Table~\ref{tab:acqbase}. We do not claim~\eqref{eq:acq} optimizes expected \topk-entropy
reduction exactly; it is a cheap boundary-targeted surrogate, and we validate its benefit
empirically.

\subsection*{Acquisition: full evaluation}\label{app:acq}
\paragraph{Analyzed at the right level.}
Fig.~\ref{fig:budget} plots \topk{} recall versus comparison budget under the bias-aware model,
and Table~\ref{tab:acqbase} tests the comparison across all five biased judges against Thompson
sampling and LUCB as well as the budget-agnostic rules. How the test is constructed matters
more here than any modeling choice, so we state it first. Each pool is run with several
acquisition seeds, but seeds within a pool share the same frozen verdict matrix, so they are
\emph{not} independent replicates: treating $10$ pools $\times\,6$ seeds as $60$ samples is
pseudo-replication and inflates significance by roughly an order of magnitude. We therefore
average seeds within a pool, treat the \textbf{pool} as the unit, use an exact paired
permutation test, and Holm-correct across baselines. Fig.~\ref{fig:acq} shows the mechanism
(our rule pours comparisons onto boundary items and starves the rest); naive variants never
exceed their wrong-\topk{} plateau regardless of budget or acquisition rule.

On the two judges with $10$ pools, the rule beats \emph{every} baseline---Thompson and LUCB
included---at $B{=}120$ (Holm $p\le.023$). On the remaining three the point estimates also
favor it, but we draw no conclusion there, for a reason worth stating precisely: with $n$ pools
an exact permutation test cannot return $p$ below $2/2^{n}$, so at $n{=}5$ ($p_{\min}{=}.063$)
and $n{=}3$ ($p_{\min}{=}.25$) significance is \emph{unobtainable by construction}. Absence of
stars in those columns measures our pool count, not the method. Two caveats bound even the
positive result: it holds under the frozen verdict matrix---a \emph{deterministic, repeat-free}
protocol---and the score is a heuristic product of three factors rather than an optimal rule.
This is why we present acquisition as a secondary, applied contribution.

\paragraph{Stronger acquisition baselines, across five judges.}
Beating round-robin and D-optimality is a low bar, so we also compare against the standard
strong strategies for \topk{} identification---\emph{Thompson sampling} (posterior boundary
sampling: draw $\theta\!\sim\!$ posterior and contest the sampled \topk{} boundary) and
\emph{LUCB} (query the weakest lower bound inside the current \topk{} against the strongest
upper bound outside)---on all five biased judges, with the \emph{pool} as the unit of analysis
(Table~\ref{tab:acqbase}). Our rule has the highest point estimate against \emph{every}
baseline---random, round-robin, global uncertainty, Thompson and LUCB---on all five judges.
Pool-level significance, however, is established only on Llama and Qwen,
the two judges with enough pools: there our rule beats \emph{every} baseline including Thompson
and LUCB (Holm-corrected $p\in[.010,.023]$). On the remaining three judges the comparison is
\emph{underpowered by construction}---with $n$ pools an exact permutation test cannot return
$p<2/2^{n}$, i.e.\ $.063$ at $n{=}5$ (4o-mini, DeepSeek) and $.25$ at $n{=}3$ (Gemini)---so we
report point estimates there but claim no significance. We therefore do not claim a general
advantage over posterior-sampling bandits: what our score adds is determinism given the
posterior (no sampling variance) and an interpretable factor structure we can ablate.

\begin{table}[t]
\centering\small
\setlength{\tabcolsep}{3pt}
\begin{tabular}{lccccc}
\toprule
$B{=}120$ recall & Llama & Qwen & 4o-mini & DeepSeek & Gemini \\
pools $n$ & $10$ & $10$ & $5$ & $5$ & $3$ \\
\midrule
\textbf{Top-$k$-aware (ours)} & $\mathbf{.81}$ & $\mathbf{.82}$ & $\mathbf{.85}$ & $\mathbf{.93}$ & $\mathbf{.88}$ \\
Thompson & $.72^{*}$ & $.77^{*}$ & $.72$ & $.88$ & $.83$ \\
LUCB & $.67^{*}$ & $.71^{*}$ & $.71$ & $.84$ & $.77$ \\
Global-uncertainty & $.60^{*}$ & $.70^{*}$ & $.68$ & $.84$ & $.67$ \\
Round-robin & $.61^{*}$ & $.65^{*}$ & $.69$ & $.79$ & $.69$ \\
Random & $.61^{*}$ & $.67^{*}$ & $.66$ & $.76$ & $.67$ \\
\midrule
min.\ attainable $p$ & $.002$ & $.002$ & $.063$ & $.063$ & $.25$ \\
\bottomrule
\end{tabular}
\caption{Acquisition rules on the five biased judges, \textbf{pool} as the unit (seeds averaged
within pool; exact paired permutation; Holm-corrected over the five baselines within each
judge). $^{*}$: $p{<}.05$ vs.\ ours (Llama $p{\in}[.010,.018]$, Qwen $[.010,.023]$). The last
row is essential for reading the right-hand columns: with $n$ pools the smallest attainable
exact $p$ is $2/2^{n}$, so on 4o-mini/DeepSeek ($n{=}5$) and Gemini ($n{=}3$) \textbf{no
pool-level test can reach $p{<}.05$}---the absence of stars there reflects unobtainable power,
not absence of effect, and point estimates favor our rule on all five judges. On the two judges
with enough pools we beat every baseline, including Thompson sampling and LUCB.}
\label{tab:acqbase}
\end{table}

\paragraph{Full-data recall is not the acquisition ceiling.}
One number deserves explicit comment because it looks paradoxical. On GPT-4o-mini the
\topk-aware rule reaches $0.89$ recall at its best budget ($B\!\approx\!200$) but the
\emph{full}-data bias-aware fit (all $870$ comparisons) is only $0.76$ (Table~\ref{tab:main}). Recall is
therefore \emph{non-monotone} in budget here: the curve peaks near $B{=}200$ and settles back
to $0.76$ by full data. This can arise under a frozen, mildly misspecified oracle
because adaptive subset selection acts as a form of implicit regularization; we treat the
non-monotonicity as a protocol-specific empirical effect rather than attributing it to a single
mechanism. The discrepancy is most pronounced on GPT-4o-mini (which carries a strong
second-position preference, $\hat\kappa{=}{-}0.84$); on Llama and Qwen the selected-budget and
full-data results are substantially closer. We therefore do not describe the rule as ``reaching
the full-data ceiling fastest'' for such judges.

\paragraph{Robustness check: a stochastic judge.}
All results above use a frozen verdict matrix (each ordered pair queried once and cached),
which suits repeat-free comparison algorithms but is not how an API judge behaves. We therefore
built a \emph{stochastic} oracle: instead of caching the argmax we store the judge's own
Bernoulli probability $p_{ij}=\sig(\text{logit}_A-\text{logit}_B)$ and draw a \emph{fresh}
verdict at every query, allowing a pair to be re-queried. The judge is genuinely stochastic
under this reading---$43\%$ of pairs have $p_{ij}\in(0.2,0.8)$---so the frozen protocol does
discard real judge uncertainty. We collected $10$ such pools (enough that an exact pool-level
permutation test can reach $p{=}.002$) and re-ran every acquisition rule with fresh verdicts,
refitting every $2$ steps; frequent refits matter here, because with a stale posterior a
deterministic greedy rule re-picks the \emph{same} pair until the next refit while Thompson's
sampling diversifies for free.

The result splits cleanly, and the split is the honest headline. Our rule leads in point
estimate at every budget ($0.64/0.71/0.78/0.83$ at $B{=}60/120/200/320$), and against the
budget-agnostic and information-theoretic rules the advantage survives the switch to a
stochastic judge: we beat round-robin at $B{=}60,120,320$ (Holm $p{=}.039/.020/.031$) and
D-optimality at $B{=}120,320$ ($p{=}.031/.020$). But against the strong bandit baselines it
does \emph{not}: we never separate from Thompson (Holm $p{=}.19$--$1.0$) or LUCB
($p{=}.27$--$1.0$) at any budget, despite leading them on points. Contrast this with the frozen
protocol, where we beat Thompson and LUCB outright (Table~\ref{tab:acqbase}). We therefore
scope the claim exactly: \textbf{targeting the \topk{} boundary beats budget-agnostic and
global-uncertainty allocation under both protocols, but its edge over posterior-sampling
bandits is an artifact of the repeat-free setting}. Designing a rule that keeps the boundary
focus while diversifying under noise---which is what Thompson gets for free---is the clearest
open problem this work leaves.

\paragraph{Ablation: the two contributions are complementary.}
A $2{\times}2$ ablation (Llama, $B{=}120$; Table~\ref{tab:ablate}, left) shows the bias-aware
model and the \topk-aware rule are \emph{synergistic}. Acquisition alone barely helps
($0.50\!\to\!0.51$ under the naive model): efficiently querying is pointless when you are
sharpening the \emph{wrong} ranking. Bias correction alone helps ($0.50\!\to\!0.61$), and the
two together reach $0.82$---the acquisition gain materializes ($+0.21$) only once the model
recovers the right quality signal. The correctness gain also holds across
$k$ (Table~\ref{tab:ablate}, right): bias-aware beats naive at $k{=}1,3,5,10$, most at
$k{=}5$ ($0.61\!\to\!0.85$). It also scales: at $N{=}100$ items the gain persists
($0.60\!\to\!0.80$ at $k{=}10$, Llama), at a runtime that stays interactive; we validate $N{\le}100$ and do not claim the method has
been demonstrated at $N$ in the hundreds (see the complexity discussion).

\begin{table}[t]
\centering\small
\begin{minipage}{0.52\columnwidth}\centering
\begin{tabular}{lcc}
\toprule
$B{=}120$ & r-robin & top-$k$ \\
\midrule
naive       & $.50$ & $.51$ \\
bias-aware  & $.61$ & $\mathbf{.82}$ \\
\bottomrule
\end{tabular}
\end{minipage}\hfill
\begin{minipage}{0.44\columnwidth}\centering
\begin{tabular}{lcc}
\toprule
$k$ & naive & bias \\
\midrule
1  & $.97$ & $\mathbf{1.00}$ \\
3  & $.79$ & $\mathbf{.90}$ \\
5  & $.61$ & $\mathbf{.85}$ \\
10 & $.82$ & $\mathbf{.87}$ \\
\bottomrule
\end{tabular}
\end{minipage}
\caption{Left: $2{\times}2$ ablation (model $\times$ acquisition), Llama, recall at
$B{=}120$---the two contributions combine super-additively. Right: recall@$k$ on full data
(mean over biased judges)---bias-aware wins at every $k$.}
\label{tab:ablate}
\end{table}

\paragraph{Acquisition-score ablation.}
The score~\eqref{eq:acq} is a product of three factors, and each earns its place (Llama,
$B{=}120$): dropping the boundary-entropy term---the \topk-aware part---costs the most
($0.76\!\to\!0.62$, collapsing to global D-optimality), while dropping the variance or the
$p(1{-}p)$ informativeness terms costs less ($0.76\!\to\!0.75$ and $0.73$); all three beat
entropy-alone ($0.67$) and random ($0.61$). The score is thus not an arbitrary product---the
boundary term (its novel ingredient) contributes most, atop two standard information factors
(Fig.~\ref{fig:ablate}).

\begin{figure}[t]
\centering
\includegraphics[width=0.80\columnwidth]{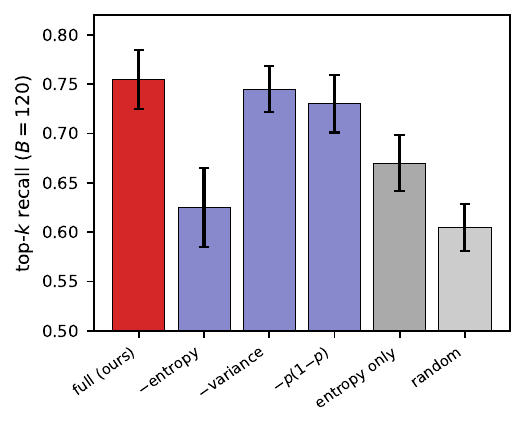}
\caption{Acquisition-score ablation (Llama, $B{=}120$, $\pm$s.e.). Each factor of the
score~\eqref{eq:acq} helps; dropping the \topk{}-boundary entropy term costs the most
(it collapses to global D-optimality). All variants beat random.}
\label{fig:ablate}
\end{figure}

\paragraph{Cost-effectiveness.}
The practical payoff is that debiasing a cheap judge beats paying for a frontier one at a
given recall (Fig.~\ref{fig:cost}).\footnote{Cost snapshot (public list prices, July 2026):
GPT-4o-mini \$0.15/\$0.60 per 1M input/output tokens; GPT-5.1 \$1.25/\$10 per 1M. A comparison
is $\approx$$293$ input $+$ $16$ output tokens ($1.3$ tokens/word). These are a snapshot for reproducibility, not a stable claim---prompt
length, output budget, hidden reasoning tokens, and vendor repricing all move the ratio, so we
report the multiplier as an instance, not a property of the method.} Pricing each comparison at
these rates (GPT-4o-mini vs.\ GPT-5.1, $\approx$$30\times$ apart on output), GPT-4o-mini with
our bias-aware model and \topk-aware acquisition
reaches recall $0.89$ for $\approx$\$$0.014$, whereas the unbiased GPT-5.1 needs
$\approx$\$$0.28$ to reach $0.93$---a $\approx$$20\times$ cost gap around $0.89$ recall at
July 2026 list prices for our $\approx$$293$-token comparisons; the multiplier is workload- and
price-dependent, not a constant. The
frontier judge \emph{does} attain perfect recall ($1.00$ at $\approx$\$$0.42$), so the last
few points of recall are worth paying for; but for the common regime of high-but-not-perfect
\topk{} recovery, correcting a cheap judge is far cheaper than upgrading the judge.

\paragraph{The crossing point is a property of the setting, not a threshold.}
Because we lean on the sign change, we must not let it read as a universal constant. It is
not. Sweeping the crossing over the factors it could plausibly depend on
(Table~\ref{tab:crossing}), it moves between $0.22$ and $0.60$. Two factors drive it: the
\emph{true} bias magnitude ($0.22$ at $c^\star{=}0.5$ rising to $0.60$ at $c^\star{=}3$---a
strongly biased judge tolerates more genuine correlation before correction backfires) and the
pool size ($0.22$ at $N{=}15$ vs.\ $0.52$ at $N{=}60$). It is comparatively insensitive to the
priors ($0.45$--$0.51$ over two decades of $\tau$; $0.46$--$0.50$ over $\lambda$), to the
quality spread ($0.42$--$0.50$), and to $k$ ($0.44$--$0.50$). What is robust is the
\emph{qualitative law}---the benefit decreases monotonically in $\mathrm{corr}(\theta,x)$ and
eventually reverses---and it is that law, not the number, that our story rests on. The
$\approx$$0.45$ figure describes our simulation configuration and should not be used as a
decision threshold on a new task; the gate, which directly assesses the operational validity of correction, is
what we recommend instead.

\begin{table}[t]
\centering\small
\setlength{\tabcolsep}{4pt}
\begin{tabular}{llc}
\toprule
factor varied & values & crossing $\mathrm{corr}(\theta,x)$ \\
\midrule
true bias $c^\star$ & $0.5 / 1 / 2 / 3$ & $\mathbf{.22 / .50 / .55 / .60}$ \\
pool size $N$ & $15 / 30 / 60$ & $\mathbf{.22 / .50 / .52}$ \\
quality spread $\sigma$ & $0.6 / 1.2 / 2.5$ & $.50 / .50 / .42$ \\
quality prior $\lambda$ & $0.2 / 1 / 5$ & $.50 / .50 / .46$ \\
covariate prior $\tau^{-2}$ & $.01 / .1 / 1$ & $.45 / .50 / .51$ \\
$k$ & $2 / 5 / 10$ & $.44 / .50 / .49$ \\
\bottomrule
\end{tabular}
\caption{Where the benefit of correction changes sign, as the simulation configuration varies
(one factor at a time from the baseline, which crosses at $0.50$). The crossing ranges over
$0.22$--$0.60$ and is governed mainly by the true bias magnitude and the pool size, not by the
priors. The monotone decrease in $\mathrm{corr}(\theta,x)$ is robust; the specific value is
not a transferable threshold.}
\label{tab:crossing}
\end{table}

\paragraph{Amortizing the anchor cost.}
That last comparison is, however, the worst case for us, because the anchor cost is paid
\emph{once per (judge, covariate, task family)} while the comparison cost is paid \emph{per pool}: the gate
asks whether this judge's verbosity preference is spurious for this task family, and that
answer is reused for every pool judged afterwards. Table~\ref{tab:amort} charges the full
$\approx$\$$0.14$ of $40$ anchors against $P$ reused pools. At $P{=}1$ the anchors dominate
($11\times$ overhead over ungated evaluation); by $P{=}10$ the overhead is $2.0\times$ and by
$P{=}100$ it is $1.1\times$, i.e.\ essentially free. The comparison that matters for a
practitioner is not gated-vs-ungated but gated-cheap-judge vs.\ simply paying for a frontier
judge, which needs no correction: the gated pipeline is already $1.8\times$ cheaper at $P{=}1$
and $18\times$ cheaper at $P{=}100$, at $0.89$ vs.\ $0.93$ recall. The caveat is that this
amortization assumes the gate's verdict transfers across the reused pools---i.e.\ that
$\mathrm{corr}(\theta,x)$ is stable within the task family; a pool drawn from a family where
verbosity becomes legitimate would need re-gating: a meaningful shift in the task family can
change $\mathrm{corr}(\theta,x)$, and the synthetic crossing near $0.45$ is illustrative rather
than a transferable drift threshold.

\begin{table}[t]
\centering\small
\setlength{\tabcolsep}{5pt}
\begin{tabular}{lccc}
\toprule
pools reusing one gate decision, $P$ & $1$ & $10$ & $100$ \\
\midrule
anchor labels ($\$0.14$ once), per pool & $\$.140$ & $\$.014$ & $\$.0014$ \\
cheap-judge comparisons, per pool & $\$.014$ & $\$.014$ & $\$.014$ \\
\midrule
\textbf{total per pool} (recall $0.89$) & $\$.154$ & $\$.028$ & $\$.0154$ \\
overhead vs.\ ungated cheap judge & $11\times$ & $2.0\times$ & $1.1\times$ \\
vs.\ frontier judge ($\$.28$, recall $0.93$) & $1.8\times$ & $10\times$ & $\mathbf{18\times}$ \\
\bottomrule
\end{tabular}
\caption{Anchor cost amortizes over reused pools. The one-time $40$-anchor gate decision is
charged against $P$ pools judged with the same judge and covariate; the last row is how much
cheaper the gated cheap judge is than simply using a frontier judge, which needs no
correction. Even at $P{=}1$ the gated pipeline is cheaper than the frontier judge; the
$10\times$ anchor overhead quoted above is a single-pool artifact.}
\label{tab:amort}
\end{table}

\paragraph{A second task domain.}
To check that the phenomenon is not an artifact of one topic, we rebuilt the pool in an
unrelated domain---\emph{Python and basic algorithms} factual QA (e.g.\ ``dictionary lookup
is on average $O(1)$'' vs.\ the false ``\dots is $O(n)$ because dictionaries are sorted
linked lists''), with the same fixed-statement-count construction and an independent
elaboration manipulation. The effect replicates and is if anything stronger: the Llama judge
shows a larger length coefficient than on the Moon pool ($\hat c{=}{+}1.74$ vs.\ $+0.99$),
naive recall is $0.46{\pm}.03$, and bias-aware recovers $0.72{\pm}.06$
($\Delta{=}{+}0.26$, $95\%$ CI $[+0.12,+0.38]$, paired $t$ $p{=}0.006$, $10$ pools). Verbosity
bias and its correction are therefore not tied to a single topic---though both pools share
the property that verbosity is spurious \emph{by construction}, which is exactly the
condition Fig.~3 shows to matter.

\paragraph{Item construction.}
Each item answers the prompt ``Give an accurate and informative overview of \emph{the
Moon}.'' from a bank of $12$ true and $12$ false statements about the topic. With a fixed
number of statements per item, quality is $\theta_i=(\#\text{true})-(\#\text{false})$; the
true \topk{} are the fully-correct items. The verbosity covariate is binary: the terse
form lists the chosen statements, the elaborated form appends to each a content-free but
authoritative-sounding clause (identical factual content, more tokens). Verbosity is
assigned by a balanced shuffle rejected unless $|\mathrm{corr}(\theta,\text{verb})|<0.12$.
We use $N{=}30$, $k{=}5$.

\paragraph{Oracle collection and offline evaluation.}
For each judge we query all $N(N{-}1)$ ordered pairs once and cache the
verdict matrix. Acquisition strategies are then evaluated \emph{offline} against this fixed
oracle: a strategy reveals matrix entries up to its budget, refits, and is scored by
\topk{} recall. This makes every strategy face the identical judge and makes results
exactly reproducible; randomness enters only through the strategy's own seed. We report
means over oracles (random item draws) and acquisition seeds ($8$ for the budget curves, $6$ for the paired significance tests).

\paragraph{Hyperparameters and protocol.}
Unless noted we use quality-prior precision $\lambda{=}1$, bias-prior precision
$\lambda_b{=}1/\tau^2{=}0.1$, $S{=}1500$ membership samples, and refit the Laplace fit every
$8$ acquisitions with warm-started L-BFGS. Items: $N{=}30$, $k{=}5$, $C{=}6$ statements per
item from a bank of $12$ true and $12$ false statements. Oracle (item-pool) counts per judge
match the $n$ column of Table~\ref{tab:main} exactly: $10$ for the locally served
Llama-3.1-8B, Qwen-2.5-Coder-7B and Phi-4 (unmetered, so we could collect more pools); $5$ for
GPT-4o-mini, DeepSeek-chat and Coder-6.7B; $3$ for Gemini-flash-lite, DeepSeek-V4-flash,
GPT-4.1, GPT-5.1 and Claude Haiku/Sonnet-4.6; $2$ for DeepSeek-V4-pro; $1$ for the
spend-capped Gemini-2.5-pro and Opus-4.8. A pool counts only if at least $80\%$ of its
$N(N{-}1){=}870$ ordered verdicts were collected (a few metered/reasoning judges time out on
individual queries); pools below this are dropped, and \textbf{GPT-5.5 is excluded entirely}
because all of its pools fell to $\approx$$30\%$ completeness. This is why some frontier judges
have $n{=}1$--$2$: a small-sample caveat we flag rather than hide. The separate Python/algorithms domain
contributes a further $10$ Llama pools, reported only in its own paragraph and never pooled
with the Moon pools. Each budget curve averages
over oracles and acquisition seeds ($8$ for curves, $6$ for significance); bands are $\pm1$ s.e. The judge prompt is: given
answers $A$ and $B$ to ``Give an accurate and informative overview of \emph{the Moon},'' ask
``Which answer is more accurate and complete? Respond with ONLY the single letter A or B,''
with $A,B$ filled by the two items in the drawn order.

\paragraph{Open-weight and closed (API) judge details.}
Open-weight judges (Llama-3.1-8B, Qwen-2.5-Coder-7B, Phi-4, DeepSeek-Coder-6.7B) are served
locally on a single 48\,GB GPU via HuggingFace transformers; the verdict is the
$\arg\max$ of the ``A''/``B'' next-token logits, which is faster than sampling and removes
decoding noise. Closed judges are queried through their official APIs with the exact ids
\texttt{gpt-4o-mini}, \texttt{gpt-4.1}, \texttt{gpt-5.1} (OpenAI);
\texttt{gemini-2.5-flash-lite}, \texttt{gemini-2.5-pro} (Google, via the OpenAI-compatible
endpoint); \texttt{deepseek-chat}, \texttt{deepseek-v4-flash}, \texttt{deepseek-v4-pro}
(DeepSeek); and \texttt{claude-haiku-4-5}, \texttt{claude-sonnet-4-6}, \texttt{claude-opus-4-8}
(Anthropic). Non-reasoning chat models use temperature $0$ and \texttt{max\_tokens}$=16$, and
we parse the first ``A''/``B'' token. Reasoning models emit hidden thinking tokens and need a
completion budget: for GPT-5.x we set \texttt{max\_completion\_tokens}$=64$ (and omit
\texttt{temperature}, which these models fix to $1$); Gemini-2.5-pro is run with
\texttt{reasoning\_effort=low} and a $2000$-token cap (it spends $\approx$$500$ hidden
reasoning tokens per verdict); DeepSeek-V4 models likewise use a $2000$-token cap. Claude
models are queried \emph{without} a temperature override (some reject temperature $\neq 1$),
with extended thinking left off, parsing the returned text block. In all cases we query \emph{both} orders of every unordered pair---all $N(N{-}1)$ ordered
queries---recording which item was shown first, and each ordered pair is queried once and
cached.

We are precise about what that caching does and does not buy, because the three settings above
are not equally deterministic. Only the \textbf{open-weight} judges are genuinely
deterministic: we read the verdict as $\arg\max$ over the next-token ``A''/``B'' logits, which
is a fixed function of the checkpoint. Among API judges, the non-reasoning chat models are
queried at temperature $0$---which makes them near-deterministic in practice but is not a
vendor guarantee---while \textbf{GPT-5.x fixes temperature to $1$} and \textbf{Claude rejects
a temperature override}, so for those judges a cached verdict is a \emph{single draw from a
stochastic judge}, not a deterministic value. Caching therefore constructs a frozen oracle
\emph{by protocol}, not because the underlying judges are deterministic. This is a deliberate
choice---it isolates the systematic bias we study from run-to-run noise---and we quantify what
it hides rather than assume it away: $43\%$ of pairs are genuinely uncertain under an open
judge's own logits, and our stochastic-judge experiment re-runs the acquisition comparison
without the freeze.

\paragraph{Reproducibility of API judges.}
The open-weight judges (Llama-3.1-8B, Qwen-2.5-Coder-7B, Phi-4, DeepSeek-Coder-6.7B) are fully
reproducible from public checkpoints, and our central claims---bias correction on
biased-but-competent judges, the $2{\times}2$ synergy, and the acquisition ablation---rest on
them. The proprietary judges were queried through their APIs in July 2026 at the exact model
ids listed above; because vendors update models behind an id, we treat those numbers as a
snapshot and do not hinge conclusions on any single frontier version. Each ordered pair is
queried once and cached, so the reported error bars reflect variation across oracles (item
pools) and acquisition seeds rather than decoding randomness---but, as noted above, this is a
property of the \emph{frozen protocol} and not a claim that the judges are deterministic: two
of the API families cannot be queried at temperature $0$ at all. A re-collection would
therefore not reproduce every cached verdict for those judges; the temperature-$1$ flip rate
we measure for GPT-4o-mini ($8\%$) is the right order of magnitude to expect.

\paragraph{Bias probes.}
The isolated-bias probes hold quality fixed and vary one factor. Position: equal-quality
pairs shown in both orders; a judge with no position bias prefers the same underlying
answer regardless of order. Verbosity/format: the same statements presented terse vs.\
elaborated, or plain vs.\ markdown. We measure $\Pr(\text{prefer the contrasted side})$
with Wilson $95\%$ intervals.

\paragraph{Synthetic simulator.}
The simulator draws $\theta_i\sim\mathcal{N}(0,1.25^2)$ and an independent verbosity
covariate, and generates verdicts from Eq.~\eqref{eq:model} with controllable
$c^\star,\kappa^\star$. It reproduces the two real-judge effects and isolates the
\emph{unbiased-judge} control ($c^\star{=}\kappa^\star{=}0$): there the bias-aware model
stays close to the naive model but pays a real price ($0.79$ vs.\ $0.86$ \topk{} recall):
the prior limits, but does not eliminate, the variance cost under an unbiased judge. This is
one reason the gate is necessary rather than optional.
The simulator also shows the global-uncertainty rule's small-budget deficit is present
even without bias, i.e.\ it is a property of \topk{} targeting, not of debiasing.

\paragraph{Shrinkage strength.}
The prior precision $1/\tau^2$ (reported as $\lambda_b$) trades off correction against
protection. Sweeping $\lambda_b\in\{0.1,1,2\}$, the bias-aware gain on biased judges
decreases gently but stays clearly positive (e.g.\ Llama drops from $+0.32$ toward $+0.24$ as
$\lambda_b$ grows). The fixed prior preserves most of the gain on biased judges, while
\emph{small negative changes remain possible} on near-neutral or sub-competent judges---the
mildly brevity-preferring Claude-Haiku ($\Delta{=}{-}0.07$) and the sub-competent Coder-6.7B
($\Delta{=}{-}0.04$) in Table~\ref{tab:main}---because modeling a covariate a judge barely has is not free.
A single moderate $\lambda_b{=}0.1$ keeps essentially all of the gain where there is bias to
correct, at this small and bounded downside where there is not. We also implemented a
\emph{hierarchical} version that learns $\tau$ per judge by empirical Bayes (alternating the
Laplace fit with the evidence update
$\tau^2\!\leftarrow\!\tfrac12(\hat c^2+\hat\kappa^2+\mathrm{Var}[\hat c]+\mathrm{Var}[\hat\kappa])$).
It removes the need to tune $\lambda_b$ and learns small $\tau$ for near-unbiased judges and
larger $\tau$ for biased ones; on our data, however, a fixed moderate $\lambda_b$ already
matches the learned-$\tau$ version, which gives no consistent
improvement (and occasionally over-shrinks). We therefore report the fixed prior in the main
results and view the hierarchical variant as a way to avoid hand-tuning $\lambda_b$ rather
than as a source of additional accuracy.

\paragraph{External-validity details.}
The LLMBar experiment (Fig.~2) judges each pair in both presentation orders
with GPT-4o-mini; the judge score is the order-averaged preference for output~1, and the
length/format features are the word-count and markdown-marker differences. The corrected
column is the $5$-fold cross-validated accuracy of an $\ell_2$-regularized logistic model
predicting the gold label from the judge score and these features. The SummEval check uses
the $100$ articles of the SummEval benchmark, taking each article's machine summaries as
items, the mean of the four human scores as ground-truth quality, and standardized word
count as the covariate (no elaboration manipulation exists in this external corpus, so the
binary indicator used on the controlled benchmark is unavailable here).

\subsection*{Proofs}
\paragraph{Proposition~\ref{prop:naive} (confounded estimand).}
Work in the Thurstone model $\Pr(a\succ b)=\Phi\!\big((\mu_a-\mu_b)/\sigma\big)$ with
augmented score $\mu_i=\theta_i+c^\star x_i$ (the position term averages out under
randomized order). The map from a score gap to a probability is a strictly increasing
bijection, so the population win-rates $\{\Pr(a\succ b)\}$ determine the gaps
$\{\mu_a-\mu_b\}$ uniquely. The naive model fits
$\Pr(a\succ b)=\Phi\!\big((\tilde\theta_a-\tilde\theta_b)/\sigma\big)$; matching all
population win-rates forces $\tilde\theta_a-\tilde\theta_b=\mu_a-\mu_b$ for every $a,b$, so
$\tilde\theta_i=\theta_i+c^\star x_i$ up to an additive constant. Under
Assumption~\ref{as:indep} the added term is non-constant across items and uncorrelated with
$\theta$, so $\tilde\theta$ is not order-equivalent to $\theta$: any $i,j$ with
$0<\theta_i-\theta_j<c^\star(x_j-x_i)$ have $\tilde\theta_j>\tilde\theta_i$. If such a pair
straddles the boundary (one truly in $S_k$, one not), then $S_k(\tilde\theta)\neq
S_k(\theta)$. As this concerns the estimand $\tilde\theta$, not a finite sample, more
comparisons cannot remove the error. The logistic case is identical with $\Phi$ replaced by
the (also bijective) logistic link. \hfill$\square$

\paragraph{Proposition~\ref{prop:ident} (non-identifiability), full argument.}
Write the design vector of a comparison as $z_{ab}=(e_a-e_b,\;x_a-x_b,\;1)$. The covariate
entry satisfies the identity
\begin{equation}
x_a-x_b=\sum_i x_i\,(e_{a,i}-e_{b,i}),
\label{eq:collinear}
\end{equation}
i.e.\ the $c$ column of the design is the fixed linear combination $\sum_i x_i\cdot(\text{item
column }i)$ of the $\theta$ columns, \emph{for every comparison, regardless of which pairs are
observed}. Hence $v_\delta=(\delta x_1,\dots,\delta x_N,\,-\delta,\,0)$ satisfies
$z_{ab}^\top v_\delta=\delta(x_a-x_b)-\delta(x_a-x_b)=0$ for all $a,b$: it is a null direction
of the whole design, so $\beta$ and $\beta+v_\delta$ give identical likelihoods. This is
exactly the reparametrization $\theta_i'=\theta_i+\delta x_i,\;c'=c-\delta$ of
Eq.~\eqref{eq:invariance}. Adding same-$x$ comparisons, cross-$x$ comparisons, both orders, or
arbitrarily many repeats cannot remove $v_\delta$, because~\eqref{eq:collinear} holds
comparison-by-comparison; no design over a fixed item pool with one free $\theta_i$ per item
escapes it. The Hessian $H=\sum_{\mathcal D}\pi(1{-}\pi)z z^\top$ is therefore singular along
$v_\delta$ (in addition to the usual Bradley--Terry constant $(\mathbf 1,0,0)$), which we
confirm numerically: rank $30$ for a $32$-column design. Only $\phi=\theta+cx$ (up to a
constant) and $\kappa$---whose column is made independent of the item block by order
randomization---are identified. Adding the Gaussian prior makes $H+\Lambda\succ0$ and hence
the MAP unique, but this is the prior selecting a point along $v_\delta$, and the selected
point is given by~(3); it is not the data identifying $c$. \hfill$\square$

\paragraph{Why the paired design escapes the argument.}
The proof of~\eqref{eq:collinear} uses that each item has both its own free $\theta_i$
\emph{and} a single fixed $x_i$. In the paired design the parameter attached to item $a$ is
$\theta_{b(a)}$ for its \emph{base content} $b(a)$, and two items with the same base carry
different covariate values. The would-be null direction now requires
$\delta x_a=\delta x_{a'}$ for two items $a\neq a'$ sharing a base but with
$x_a\neq x_{a'}$, forcing $\delta=0$. Concretely, a within-base comparison has
$e_{b(a)}-e_{b(a')}=0$ but $x_a-x_{a'}\neq0$, so its logit is $c(x_a-x_{a'})+\kappa$ and
depends on $c$ alone. The only remaining null direction is the Bradley--Terry constant, which
matches the observed rank of $16$ out of $17$ columns. \hfill$\square$

\paragraph{Paired-design experiment details.}
Each base content is a fixed-length list of $6$ statements drawn from the same true/false bank
as the main benchmark, so its quality is $\theta=(\#\text{true})-(\#\text{false})$ exactly as
before; the terse rendering lists the statements and the verbose rendering appends an
elaboration clause to each, leaving the statement count---and therefore $\theta$---unchanged.
Each base thus contributes two items sharing one $\theta$ at two covariate values, which is
what breaks the invariance. We collect the \emph{full} ordered verdict matrix
($N(N{-}1)$ comparisons, $N{=}2\times$bases) from each judge. The main paired study is $4$
open judges $\times$ $6$ pools at $15$ bases ($24$ pools, $20{,}880$ verdicts); the design-size
sweep adds $4$ judges $\times$ $3$ seeds at $8$, $25$ and $40$ bases ($108{,}120$ further
verdicts, the largest pools being $80$ items and $6{,}320$ comparisons each). The two analyses
in Table~\ref{tab:paired} differ \emph{only} in the design matrix built from identical data:
the unpaired analysis gives each rendering its own $\theta$ column, the paired analysis maps
both renderings of a base to one column. Profile likelihoods $\max_{\theta,\kappa}\ell$ at
fixed $c$ are computed by L-BFGS on the unpenalized objective, and the reported range is over
a $9$-point grid $c\in[\hat c{-}1,\hat c{+}1]$.

\paragraph{Acquisition score (derivation of the heuristic).}
A comparison at $(i,j)$ with Fisher weight $w=\pi_{ij}(1{-}\pi_{ij})$ updates the quality
precision by $w\,uu^\top$, $u=e_i-e_j$. By Sherman--Morrison the posterior variance of the
contrast $v=\theta_i-\theta_j$ strictly decreases, by an amount that grows with $w$ and with
the current $\Var(v)$; this is the D-optimal / one-step information term
$\pi_{ij}(1{-}\pi_{ij})\Var(v)$ in the score. We multiply it by a
\emph{boundary-uncertainty} weight $\mathcal H(p_i)+\mathcal H(p_j)$, where $p_m$ is the
posterior probability that item $m$ is in the \topk{}. The binary entropy $\mathcal H(p)$ is
maximal at $p{=}\tfrac12$ and zero at $p\in\{0,1\}$, so this weight is large exactly for
items whose membership is still in doubt---the items a single comparison can move across the
\topk{} boundary---and negligible for items already clearly in or out. We emphasize that
$\mathcal H(p)$ is used as a boundary-uncertainty weight, \emph{not} as an approximation to
the entropy-reduction sensitivity $|\mathcal H'(p)|$ (which instead vanishes at $p{=}\tfrac12$);
the score is therefore a motivated surrogate rather than an exact expected-entropy-reduction
rule. Dropping the boundary weight recovers the pure D-optimal rule, which is
information-optimal for $v$ but indifferent to whether $v$ affects \topk{} membership,
explaining its poor small-budget behavior in Table~\ref{tab:acqbase}. \hfill$\square$

\subsection*{Expanded related work}\label{app:related}
\paragraph{Covariates, fixed effects, and identification in measurement models.}
The mathematics behind our Prop.~\ref{prop:ident} is not new, and we want to be explicit
about that rather than let it be discovered. Our immediate problem is exact collinearity in the design matrix, and hence
non-identification---formally distinct from the Neyman--Scott
\emph{incidental-parameter problem}, which concerns the inconsistency of a structural
parameter as nuisance parameters proliferate with the sample. But it is a close relative, and
belongs to the broader fixed-effects/incidental-parameter literature
\citep{neyman1948consistent,lancaster2000incidental}: both arise from granting each unit a free
effect, and here those effects do not merely contaminate the covariate coefficient---they
absorb it entirely, leaving the likelihood flat. The remedies
we invoke are equally standard---condition on, or design for, \emph{within-unit} variation, as
in conditional logit for panel data with fixed effects \citep{chamberlain1980analysis} and
conditional maximum likelihood for the Rasch model \citep{andersen1970asymptotic}. In that
language, our paired rendering design is a within-item repeated-measures design, and it works
for exactly the reason those methods work. Measurement theory has likewise long separated
rater effects from ability \citep{linacre1989many}, studied when an item behaves differently
across groups \citep{holland2012differential}, and formalized when an instrument may be
compared across populations at all \citep{meredith1993measurement}.

We therefore claim no theorem that would surprise a psychometrician. What we claim is
that this known impossibility silently governs a practice the ML community is
building on: LLM-judge debiasing is performed and reported as if the covariate coefficient
were estimated, and the field has neither noticed that it cannot be, nor asked what the
resulting numbers mean. Our contribution is the transfer and its consequences---measuring
that the flat direction is exact and data-proof on real judges ($0.0000$ nats, $48$ pools),
showing that reported ``debiasing'' is a prior-induced decomposition governed by
$\widehat{\mathrm{Cov}}(\phi,x)$ (Eq.~3), characterizing when that assumption pays
and when it silently destroys signal, and giving a gate and a design that make the choice
operational for practitioners who are not going to derive conditional likelihoods.

\end{document}